\documentclass[conference]{IEEEtran}
\IEEEoverridecommandlockouts

\usepackage{cite}
\usepackage{amsmath,amssymb,amsfonts}
\usepackage{algorithmic}
\usepackage{graphicx}
\usepackage{textcomp}
\usepackage{xcolor}
\def\BibTeX{{\rm B\kern-.05em{\sc i\kern-.025em b}\kern-.08em
    T\kern-.1667em\lower.7ex\hbox{E}\kern-.125emX}}

\usepackage[ruled, vlined, linesnumbered]{algorithm2e}

\usepackage{amsthm}
\usepackage{subcaption}
\usepackage{multirow}
\usepackage{comment}
\usepackage{adjustbox}
\usepackage{array}

\newcommand{\CC}{\mathbb{C}}
\newcommand{\RR}{\mathbb{R}}
\newcommand{\PP}{\mathbb{P}}
\newcommand{\ZZ}{\mathbb{Z}}

\newcommand{\mm}{\mathfrak{m}}

\newcommand{\f}{\mathbf}

\newtheorem{definition}{Definition}
\newtheorem{notation}[definition]{Notation}
\newtheorem{remark}[definition]{Remark}

\newtheorem{lemma}[definition]{Lemma}
\newtheorem{proposition}[definition]{Proposition}

\begin{document}
\thispagestyle{plain}
\pagestyle{plain}

\title{Efficient closed-form approaches for pose estimation using Sylvester forms}

\author{Jana Vráblíková, Ezio Malis, Laurent Busé 
\thanks{The authors are with Centre Inria d'Université Côte d'Azur, France. \newline Email:{\tt\small \{first name, last name\}@inria.fr}}%
}

\maketitle

\begin{abstract}
Solving non-linear least-squares problem for pose estimation (rotation and translation) is often a time consuming yet fundamental problem in several real-time computer vision applications. With an adequate rotation parametrization, the optimization problem can be reduced to the solution of a~system of polynomial equations and solved in closed form.
Recent advances in efficient closed form solvers utilizing resultant matrices have shown a promising research direction to decrease the computation time while preserving the estimation accuracy. 
In this paper, we propose a new class of resultant-based solvers that exploit Sylvester forms to further reduce the complexity of the resolution. We demonstrate that our proposed methods are numerically as accurate as the state-of-the-art solvers, and outperform them in terms of computational time. 
We show that this approach can be applied for pose estimation in two different types of problems: estimating a pose from 3D to 3D correspondences, and estimating a pose from 3D points to 2D points correspondences. 
\end{abstract}


\section{Introduction}
\label{sec:intro}
Pose estimation from geometric correspondences is a longstanding problem in computer vision, with classical solutions dating back to early formulations of 3D registration and camera resectioning. 
A substantial line of research has focused on developing closed-form solvers for various pose estimation settings. Concerning camera resectioning (i.e. the problem of pose computation from 3D points to 2D points correspondences), numerous minimal solvers (the P3p problem) \cite{Gao-TPAMI-03,Kneip-CVPR-2011,Ke-CVPR-2017,Ding-CVPR-2023} and non-minimal solvers (the Pnp problem) \cite{Lepetit-IJCV-2008, Hesch-ICCV-2011, Zheng-ICCV-2013, Kneip-ECCV-2014}  have been proposed. Concerning 3D registration (i.e. the problem of pose computation from 3D to 3D correspondences), early solutions include the method proposed in \cite{Horn-JOSA-1987}, which provide exact solutions for noise-free point-to-point correspondences but do not naturally extend to point-to-line or point-to-plane constraints. More general treatments were later provided through unified optimization frameworks incorporating multiple geometric primitives, such as in \cite{Olsson-CVPR-2006}, which introduced optimal closed-form solutions based on polynomial formulations of the registration problem. Polynomial formulations enabling closed-form solutions have become particularly influential due to their robustness and predictability compared to direct iterative techniques \cite{Wientapper-CVIU-2018,Zhou-ICRA-2020, Malis2023,Malis2024}. These approaches often rely on Gröbner basis methods or resultant constructions to eliminate unknowns and recover camera pose.

Recently, resultant-based solvers have emerged as a powerful solution for constructing efficient closed-form solvers capable of handling mixed 3D to 3D correspondences \cite{Malis2023,Malis2024}. These method exploit the algebraic structure of the polynomial systems arising from quaternion-based parametrisations of rotation to solve a least square optimization problem imposing the unit norm constraint on the quaternion with a~Lagrangian. Using elimination matrices derived from multivariate resultants, one can obtain fast and accurate closed-form solvers. Despite their effectiveness, these solvers typically require working in high polynomial degrees. Since the larger is the polynomial degrees, the larger is the elimination matrix from which we obtain the solutions, obtaining resultant-based solvers with smaller degree will decrease computational cost.

Concurrently, advances in computational algebraic geometry have introduced more refined elimination techniques. In particular, the theory of Sylvester forms, initially introduced in \cite[\S 3.10]{Jouanolou2}, has been recently revisited and further generalized to the multigraded setting in \cite{Buse2022}, with a view towards applications to the solving of zero-dimensional polynomial systems. It provides new tools for constructing compact elimination matrices with lower algebraic degrees, in comparison with the classical Macaulay elimination matrices. These techniques have shown promising results for reducing the computational cost of polynomial solvers while preserving their algebraic completeness and numerical stability.

The main contribution of our work is to integrate Sylvester forms with the hidden-variable formulation of the resultant in order to obtain new resultant-based methods that operate in degrees 7 and 8, significantly reducing the size of the elimination matrices compared to the degree 9 approach proposed in \cite{Malis2024}. We give the theoretical foundations of our approach, relying on the concept of saturation of an ideal, and prove its validity. More specifically, other key contributions of this paper are (i) a detailed analysis of the rank of certain linear systems which allows us to prove the existence of our new elimination matrices (see Proposition \ref{prop:199}), and which also explains properties stated in \cite{Malis2024} (see Remark \ref{rem:E9}), (ii) a construction  of Sylvester forms tailored to our setting, providing structural results on their coefficients that ease their evaluation (see Lemma \ref{lem:brackets}).   

To our knowledge, this is the first application of Sylvester forms to a large variety of pose estimation problems, and the first demonstration that such forms can be used to derive faster, more compact closed-form solvers without sacrificing accuracy. This establishes a new connection between advanced elimination theory and practical computer vision algorithms.

\section{Theoretical background}
\label{sec:theoretical_background}

\subsection{Pose estimation from 3D to 3D correspondences}
\label{sse:pose_from_3D_to_3D_correspondences}
The registration of two sets of 3D points is typically formulated as a nonlinear optimisation problem after matching points to points, points to planes or points to lines. The objective of the problem is to estimate the pose (rotation matrix $\f R$ and translation vector $\f t$) from measured points and corresponding points, lines and planes. 
In this section, we briefly review a~unified formulation for those problems introduced in \cite{Olsson-CVPR-2006}. 

Let $\f m_c \in \RR^3$ be a point in a current frame $\mathcal F_c$. The current point $\f m_c$ is obtained from a reference point $\f m_r$ in a referent frame $\mathcal F_r$ as follows:
\begin{equation}\label{eq:14}
	\f m_c = \f R \ \f m_r + \f t 
\end{equation}
where $\f R \in \mathbb{SO}(3)$ is a $3 \times 3$ rotation matrix and $\f t \in \RR^3$ is a~translation vector. 

Stacking the 3 rows $\f r^\top_{k} = [r_{k1}, r_{k2}, r_{k3}]$ ($k \in \{1,2,3\}$) of $\f R$ into a $9 \times 1$ vector
\begin{equation}
	\f r = [\f R]_\lor = [\f r_{1}; \f r_{2}; \f r_{3}] 
\end{equation}
and introducing a $3 \times 9$ matrix 
\begin{equation}
	\f M = \left[\begin{array}{ccc}
		\f m_r^\top & \f 0 & \f 0 \\
		\f 0 & \f m_r^\top & \f 0 \\
		\f 0 & \f 0 & \f m_r^\top 
	\end{array} \right] 
\end{equation}
we can write \eqref{eq:14} as
\begin{equation}
	\f m_c = \f M \ \f r + \f t 
\end{equation}
The point $\f m_c$ correspond to a point $\f m = \f m_r$ in the current frame $\mathcal F_c$ if 
\begin{equation}
	\f m_c - \f m = \f M \ \f r + \f t - \f m = 0 
\end{equation}
The point $\f m_c$ lies on a line in the current frame $\mathcal F_c$ with a unit direction vector $\f d \in \RR^3$ that passes through a point $\f m \in \RR^3$ if
\begin{equation}
	[\f d]_\times^2 (\f m_c - \f m) = [\f d]_\times^2 (\f M \ \f r + \f t - \f m) 
\end{equation}
where 
\begin{equation}
	[\f v]_\times = \left[\begin{array}{ccc}
		0 & -v_3 & v_2 \\
		v_3 & 0 & -v_1 \\
		-v_2 & v_1 & 0
	\end{array} \right]
\end{equation}
for any vector $\f v = [v_1,v_2,v_3]^\top \in \RR^3$.
Finally, the point $\f m_c$ belongs to a plane in the current frame $\mathcal F_c$ with unit normal vector $\f n \in \RR^3$ that passes though a point $\f m \in \RR^3$ if
\begin{equation}
	\f n^\top(\f m_c - \f m) = \f n^\top(\f M \ \f r + \f t - \f m) = 0 
\end{equation}

Given $n_m$ point to point correspondences, $n_l$ point to line correspondences and $n_p$ point to plane correspondences, the optimisation problem can be written as
\begin{equation}\label{eq:34}
	\min_{\f r, \f t} \frac 1 2 \sum_{i=1}^{n_m} w_{m_i}^2 d_{m_i}^2 + \frac 1 2 \sum_{j=1}^{n_l} w_{l_j}^2 d_{l_j}^2 + \frac 1 2 \sum_{k=1}^{n_p} w_{p_k}^2 d_{p_k}^2  
\end{equation}
where $w_m, w_l$ and $w_p$ are weights and $d_m, d_l$ and $d_p$ are point to point, point to line and point to plane distances, respectively. 
Using a semi-definite weighting matrix $\f W \in \RR^{3 \times 3}$, we can write the square distances as follows
\begin{equation}
	d^2 = (\f M \ \f r + \f t - \f m)^\top \f W (\f M \ \f r + \f t - \f m) 
\end{equation}
where we select the matrix $\f W$ according to the type of correspondence,
\begin{itemize}
	\item $\f W = w_m^2 \f I$, for a point-to-point correspondence,
	\item $\f W = -w_l^2 [\f d]_{\times}^2$, for a point-to-line correspondence,
	\item $\f W = -w_p^2 \f n\f n^\top$, for a point-to-plane correspondence.
\end{itemize}
The optimisation problem \eqref{eq:34} can be therefore written as
\begin{equation}
\label{eqn:ls_3D_to_3D_correspondences}
	\min_{\f r, \f t} \sum_{i = 1}^{n} (\f M_i \ \f r + \f t - \f m_i)^\top \f W_i (\f M_i \ \f r + \f t - \f m_i) 
\end{equation}
where $n = n_m+n_l+n_p$.

\subsection{Pose estimation from 3D to 2D correspondences}
\label{sse:pose_from_3D_to_2D_correspondences}
Another classical problem in computer vision is the estimation of the poste from the projection of 3D points into the image, the Pnp problem:
\begin{equation}\label{eq:79}
Z_c \f q_c =  \f R \ \f m_r + \f t 
\end{equation}
where $Z_c = \f r^\top_3 \f m_r + t_3$. 
Given $n_q$ 3D point to 2D point correspondences, the weighted least squares optimisation problem can be written as:
\begin{equation}
\label{eqn:ls_3D_to_2D}
	\min_{\f r, \f t} \frac 1 2 \sum_{i=1}^{n_q} w_{q_i}^2 d_{q_i}^2
\end{equation}
where
\begin{equation}
\label{eqn:d_3D_to_2D}
d^2_{q} = (\f M \ \f r + \f t - Z_c \f q_c)^\top \f W (\f M \ \f r + \f t - Z_c \f q_c)
\end{equation}
where $\f W = w_q^2 \; \f I$, for a 3D point to 2D point correspondence. Since $Z_c \f q_c =  \f r^\top_3 \f m_r + \f t$, the optimisation problem \eqref{eqn:ls_3D_to_2D} can be written as
\begin{equation}
\label{eqn:ls_3D_to_2D_correspondences}
	\min_{\f r, \f t} \sum_{i = 1}^{n} (\f P_i \ \f r + \f Q_i \f t)^\top \f W_i (\f P_i \ \f r + \f Q_i \f t)
\end{equation}
where $n = n_q$ and:
\begin{equation}
\f P = \f M - \left[ \begin{array}{ccc} \f 0_{3x3} & \f 0_{3x3} & \f q_{c} \; \f m^\top_{r} \end{array} \right]
\end{equation}
\begin{equation}
\f Q = \f I - \left[ \begin{array}{ccc} \f 0_{3x1} & \f 0_{3x1} & -\f q_{c} \end{array} \right]
\end{equation}

\subsection{Reduction to a polynomial problem}

We parameterise the rotation by a unit quaternion:
\begin{equation}
	\f q = [q_r; \f q_i] = [w;x;y;z]
\end{equation}
The rotation matrix $\f R$ can be then parameterised as follows:
\begin{equation}
     \f R(\f q) = \f I + 2q_r[\f q_i]_{\times} + 2[\f q_i]_{\times}^2
\end{equation}
where $\f q^\top \f q = 1$. Therefore, the vector $\f r(\f q) = [\f R]_\lor$ is quadratic in the variables $w,x,y,z$. 

The translation vector $\f t$ can be eliminated from the equation \eqref{eqn:ls_3D_to_3D_correspondences} as shown in \cite{Olsson-CVPR-2006}. Similarly, it can be eliminated from the equation \eqref{eqn:ls_3D_to_2D_correspondences} as shown in \cite{ Kneip-ECCV-2014}. Therefore, we can solve a new equivalent problem that depends only on the four variables $w,x,y,z$. The new optimisation problem can be written in the following form:
\begin{equation} \label{eq:66}
	\f q = \mathrm{argmin}\{c(\f r(\f q))\} = \mathrm{argmin}\{\f r^\top \f A_r \f r + 2\f b_r^\top \f r + c_r \} 
\end{equation}
subject to
\begin{equation} \label{eq:70}
	\f q^\top \f q = 1
\end{equation}
where $\f A_r \in \RR^{9 \times 9}$ is a symmetric matrix, $\f b_r \in \RR^{9 \times 1}$ and $c_r \in \RR$. Note that for the Pnp problem we have $\f b_r = 0$ and $c_r =0$.


We can impose the constraint \eqref{eq:70} using the Lagrange multiplier method:
\begin{equation}\label{eq:133}
	\min_{\lambda, \f q}\mathcal L(\lambda,\f q) = c(\f r(\f q)) + \lambda (1 -\f q^\top \f q )
\end{equation}
The solutions of \eqref{eq:66} can be then obtained as solutions of the following polynomial system
\begin{align}
	\frac{\partial \mathcal L}{\partial \lambda} &= 1 -\f q^\top \f q  = 0 \label{eq:81}\\
	\frac{\partial \mathcal L}{\partial \f q} &= \f g^\top(\f q) - \lambda \f q^\top = 0 \label{eq:82}
\end{align}
where $\f g(\f q) = [g_w(\f q); g_x(\f q); g_y(\f q); g_z(\f q)]$ is the following $4 \times 1$ vector of degree 3 in $\f q$:
\begin{equation}
	\f g(\f q) = \left(\frac{\partial c(\f r)}{\partial \f r} \frac{\partial \f r(\f q)}{\partial \f q} \right)^\top 
\end{equation}
We note that $\f g(\f q)$ can be written as
\begin{equation}\label{eq:89}
    \f g(\f q) = \f g_3(\f q) + \f g_1(\f q) 
\end{equation}
where $\f g_3(\f q)$ is of homogeneous degree 3 in $\f q$ and $\f g_1(\f q)$ is linear in $\f q$. Substituting $\f q^\top \f q = 1$ into equation \eqref{eq:89} yields 4 polynomial equations which depend linearly on $\lambda$ and are homogeneous of degree 3 with respect to $\f q$:
\begin{align}\label{eq:93} \nonumber
    \f e(\f q,\lambda) & =\f g_3(\f q) + (\f q^\top \f q) \f g_1(\f q) -  \lambda(\f q^\top \f q)\f q  \\ 
					& = \hat{\f g}(\f q) -  \lambda(\f q^\top \f q)\f q = 0 
\end{align}
The equations $\f e(\f q, \lambda)$ also depend linearly on a coefficient vector 
\begin{equation}
	\f c = [c_{1,1},c_{1,2},\dots,c_{4,20}]^\top 
\end{equation}
namely the coefficients of $\hat{\f g}(\f q)$, and we sometimes write
\begin{equation}\label{eq:127}
	\f e(\f q,\lambda, \f c) = \f e(\f q,\lambda)
\end{equation}
to emphasise the coefficients.

The problem can be therefore reduced into the problem of finding real solutions of the polynomial system 
\begin{equation}\label{eq:e1-4}
	e_1(\f q, \lambda) = e_2(\f q, \lambda) = e_3(\f q, \lambda) = e_4(\f q, \lambda) = 0 	
\end{equation}
i.e., finding real points of the variety
$\mathrm V(I) \subseteq \PP^3 \times \mathbb A^1$
defined by the ideal 
$$I= (e_1,e_2,e_3,e_4) \subset \CC[\lambda][w,x,y,z]$$ 
which is graded with respect to the variables $\f q$. 

\subsection{Eliminating $\lambda$}\label{subsec:elimLambda}

It turns out that the projection of $V(I)$ on $\PP^3$, i.e.~the elimination of the parameter $\lambda$ from the equations \eqref{eq:93}, can be easily described. Indeed, taking  exterior product we get
\begin{equation}
	\hat{\f g}(\f q) \land \f q = 0
\end{equation}
which is equivalent to the condition 
\begin{equation}\label{eq:detdef}
    \mathrm{rank} \left( 
    \begin{array}{cccc}
    \hat{g}_w & \hat{g}_x & \hat{g}_y & \hat{g}_z \\
    w & x & y & z 
    \end{array} \right) < 2
\end{equation}
where   
$$[\hat{g}_w(\f q); \hat{g}_x(\f q); \hat{g}_y(\f q); \hat{g}_z(\f q)]=\hat{\f g}(\f q)$$ 
Therefore we obtain six polynomial equations, namely the $2\times 2$-minors of the above matrix, 
\begin{equation}\label{eq:172}
	f_i(\f q) = 0,\ i = 1,\dots,6 
\end{equation}
that are homogeneous of degree 4 in $\f q$. We notice that these equations depend linearly on the coefficients $\f c$ of the equations \eqref{eq:127}, and we write $\f f(\f q,\f c)$ to emphasise the coefficients.

Denote by $J$ the ideal of $\CC[w,x,y,z]$ generated by the polynomials $f_i(\f q)$, so that $V(J) \subset \PP^3$. For all $i=1,\ldots,6$ we have $f_i \in I$ (e.g.~$x\hat{g}_w - w\hat{g}_x = xe_1 - we_2$), so that any point in $V(I)$ yields a point in $V(J)$. Conversely, given a point in $V(J)$ such that ${\f q}^\top \f q \neq 0$, there is a unique $\lambda$ satisfying equations  \eqref{eq:93} at this point (observe that $\f q\neq 0$ at any point of $\PP^3$), hence a unique point in $V(I)$. 

\subsection{Closed-form solution via elimination matrices}\label{subsec:M9}
In \cite{Malis2024}, a method to find the solutions of equations \eqref{eq:93} based on the \emph{hidden variable} approach (see \cite[Chapter 3, §5]{Cox2005}) is proposed; in this section, we briefly review it. The variable $\lambda$ is considered as "hidden", that is to say that the polynomials \eqref{eq:82} are seen as polynomials in  $\f q$, $\lambda$ being interpreted as a parameter.  

\begin{notation}
For a given degree $d \in \mathbb N$, we denote the vector of homogeneous monomials of degree $d\in \mathbb{N}$ in $w,x,y,z$ by
\begin{equation*}
    \f m_d = [ w^{d_w} x^{d_x} y^{d_y} z^{d_z}: d_w+d_x+d_y+d_z = d]
\end{equation*}
The number of such monomials is $n_d = \binom{d+3}{d}$. 
\end{notation}

First, $4 \cdot n_6 = 4 \cdot 84 = 336$ equations constructed:
\begin{equation}\label{eq:E9}
    \f e(\f q, \lambda, \f c) \otimes \f m_6 = \f E_9(\f c, \lambda)\ \f m_9 = 0 
\end{equation}
where $\f E_9(\f c, \lambda)$ is a $336 \times 220$ coefficient matrix. For a general choice of $\lambda$, $\mathrm{rank}(\f E_9(\f c, \lambda)) = 220$.

Moreover, additional $6 \cdot n_5 = 6 \cdot 56 = 336$ equations that do not depend on $\lambda$ are considered:
\begin{equation}\label{eq:F9}
    \f f(\f q,\f c) \otimes \f m_5 = \f F_9(\f c)\ \f m_9 = 0
\end{equation}
where $\f F_9(\f c)$ is a $336 \times 220$ coefficient matrix and $\mathrm{rank}(\f F_9(\f c)) = 180$.

From matrices $\f E_9(\f c,\lambda)$ and $\f F_9(\f c)$, a $220 \times 220$ matrix $\f M_9$ admitting the following structure is obtained (see Remark \ref{rem:E9}):
{\small 
\begin{equation}\label{eq:30}
    \f M_9 = \left[\begin{array}{cc}
         \f A & \f B \\
         \f C & \f D 
    \end{array}\right] = 
    \left[\begin{array}{cc}
         \f A_0 & \f B_0 \\
         \f C_0 & \f D_0 
    \end{array}\right] - \lambda
    \left[\begin{array}{cc}
         \f A_1 & \f B_1 \\
         \f 0 & \f 0
    \end{array}\right]
\end{equation}
}

\noindent where the matrix $[\f A\  \f B]$ consists of a subset of rows of the matrix $\f E_9(\f c,\lambda)$, the matrix $[\f C\  \f D]$ consists of a subset of rows of the matrix $\f F_9(\f c)$. 
In particular, $\mathrm{size}(\f A) = 40 \times 40$, $\mathrm{size}(\f B) = 40 \times 180$, $\mathrm{size}(\f C) = 180 \times 40$ and $\mathrm{size}(\f D) = 180 \times 180$,
and furthermore, $\mathrm{rank}(\f M_9) = 220$, $\mathrm{rank}(\f A) = 40$, and $\mathrm{rank}(\f D) = 180$ for a general choice of $\lambda$. 

The solution of the pose estimation problem is then to find $\lambda$ such that
\begin{equation}\label{eq:48}
    \f M_9\ \f m_9 = 0 
\end{equation}
which is possible if
\begin{equation}
    \mathrm{det}(\f M_9) = 0 
\end{equation}
This leads to solving the generalised eigenvalue problem
\begin{equation}
    \mathrm{det}(\f Q_0 + \lambda \f Q_1) = 0 
\end{equation}
where 
\begin{equation}\label{eq:64}
    \f Q_i = \f A_i - \f B_i \f D_0^{-1} \f C_0, \ i=0,1
\end{equation}
%

\medskip

We notice that the matrix $\f M_9$ is built in such a way that its rows are filled by means of polynomial equations of degree 9 belonging to the ideal $I$; this is obvious for \eqref{eq:E9} and for \eqref{eq:F9} it follows from the fact that $f_i \in I$ for all $i=1,\ldots, 6$. In what follows we construct similar matrices $\f M_7$ and $\f M_8$ corresponding  to degrees 7 and 8 respectively. However, the extension to those degrees of the method described above is not straightforward: polynomial equations belonging to the ideal $I$ (and hence $J$) are not enough and it is necessary to introduce new equations.


\section{More efficient closed-form solutions \\ using Sylvester forms}

\subsection{Number of solutions}

The instances of the problem we are considering have finitely many (complex) solutions. In this section, we provide the number of points in $\PP^3_\CC$ defined by the ideal $J$ (see Section \ref{subsec:elimLambda}). It turns out that this counting is closely related to the analysis of the rank of matrices similar to \eqref{eq:F9}, built in arbitrary degrees. 

We recall that the ideal $J \subset \CC[w,x,y,z]$ is generated by the polynomials \eqref{eq:172} which are homogeneous polynomials. Therefore, $J$ is a graded ideal and we denote by $J_\nu$ its graded component of degree $\nu$, for all $\nu \in \ZZ$.  

Given an integer $d\geq 4$, we consider the matrix $\f F_{d}(c)$ built from $6\cdot n_{d-4}$ polynomials that form a basis of the vector space $(J)_{d}$:
\begin{equation}
	\f f(\f q,\f c) \otimes \f m_{d-4} = \f F_{d}(\f c)\ \f m_{d}^T
\end{equation}
The matrix $\f F_{d}(\f c)$ is a $(6 \cdot n_{d-4}) \times n_{d}$ coefficient matrix.

\begin{proposition}\label{prop:199}
	If $V(J) \subset \PP^3_\CC$ is finite, then it consists of 40 points counted with multiply. In addition,   
	$\mathrm{rank}(\f F_{d}(\f c)) = n_{d} - 40$ if and only if $d\geq 7$. 
\end{proposition}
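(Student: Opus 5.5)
The plan is to identify $J$ as a determinantal ideal and read off its Hilbert function from the Eagon--Northcott resolution. Write $S=\CC[w,x,y,z]$. By \eqref{eq:detdef}, $J$ is the ideal of $2\times 2$ minors of the $2\times 4$ matrix whose first row $(\hat g_w,\hat g_x,\hat g_y,\hat g_z)$ has entries of degree $3$ and whose second row $(w,x,y,z)$ has entries of degree $1$. Equivalently, it is the ideal of maximal minors of a graded map $\varphi: G=S^4\to S(3)\oplus S(1)$, twisted appropriately.

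The expected codimension of such an ideal of maximal minors is $4-2+1=3$. The hypothesis that $V(J)\subset\PP^3_\CC$ is finite means that $\dim S/J\le 1$, i.e.\ $\mathrm{codim}\,J\ge 3$. Since the codimension can never exceed $3$, we get equality, and $\mathrm{grade}\,J=3$ because $S$ is Cohen--Macaulay.

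By the Eagon--Northcott theorem, the Eagon--Northcott complex is then a minimal free resolution of $S/J$, and $S/J$ is Cohen--Macaulay of dimension $1$. I expect the main care to be needed here: justifying exactness via the grade condition, and tracking the twists correctly. Computing the terms $\wedge^2G$, $\wedge^3G\otimes F^*$ and $\wedge^4G\otimes D_2(F^*)$ with the degrees $3,1$ of the two rows gives
\begin{equation*}
0\to S(-6)\oplus S(-8)\oplus S(-10)\to S(-5)^4\oplus S(-7)^4\to S(-4)^6\to S\to S/J\to 0 .
\end{equation*}

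From this resolution the Hilbert function is
\begin{equation*}
HF_{S/J}(d)=n_d-6n_{d-4}+4n_{d-5}+4n_{d-7}-n_{d-6}-n_{d-8}-n_{d-10},
\end{equation*}
with the convention $n_k=0$ for $k<0$. We obtain the degree of $S/J$ from the shifts $a$ via $e=-\tfrac16\sum \pm a^3$, where the sign is that of the homological position. This gives $-6\cdot 64+4(125+343)-(216+512+1000)=-240$, hence $e=40$. The lower power sums cancel, which serves as a consistency check. Since $S/J$ is Cohen--Macaulay of dimension one, the degree $40$ equals the sum of the local lengths at the points of $V(J)$. This proves the first claim: there are $40$ points counted with multiplicity.

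For the rank statement, note that the rows of $\f F_d(\f c)$ span $J_d$, so $\mathrm{rank}\,\f F_d(\f c)=\dim J_d=n_d-HF_{S/J}(d)$. We therefore need $HF_{S/J}(d)=40$ exactly when $d\ge 7$.

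The Castelnuovo--Mumford regularity of $S/J$ is $\max_i(a_{i}-i)=10-3=7$. As $S/J$ is Cohen--Macaulay of dimension $1$, its $a$-invariant is $7-1=6$, so $HF_{S/J}(d)=HP_{S/J}(d)=40$ for all $d>6$.

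Conversely, a direct evaluation gives $HF_{S/J}(6)=84-60+16-1=39$. Choose a linear form that is a nonzerodivisor on $S/J$; it exists after extending the field if necessary, by the Cohen--Macaulay property. Multiplication by this form shows that $HF_{S/J}$ is nondecreasing, so $HF_{S/J}(d)\le 39<40$ for all $d\le 6$. Hence $\mathrm{rank}\,\f F_d(\f c)=n_d-40$ if and only if $d\ge 7$.
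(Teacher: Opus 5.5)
Your proof is correct and follows the paper's route: you identify $J$ as a grade-$3$ determinantal ideal, write down the same Eagon--Northcott resolution (twists $S(-4)^6$, $S(-5)^4\oplus S(-7)^4$, $S(-6)\oplus S(-8)\oplus S(-10)$), and read off $\mathrm{reg}(S/J)=7$ and Hilbert polynomial $40$. The differences are only in how much detail is supplied. You justify the grade via the Eagon--Northcott height bound, get the degree from the cubic power sum of the twists, and prove the ``only if'' direction explicitly from $HF_{S/J}(6)=39$ and monotonicity under a linear nonzerodivisor; the paper instead evaluates the alternating sum for $d\ge 7$ and cites Eisenbud's Theorem 4.2 for $HF=HP$ exactly when $d\ge\mathrm{reg}$.
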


\begin{proof} Set $R:=\CC[w,x,y,z]$. The proof relies on the analysis of the resolution of the quotient ring $R/J$ by graded free $R$-modules (we refer to \cite[Chapter 5 and 6]{Cox2005} for an introduction to these concepts). Since $V(J)$ is finite, the ideal $J$ is a determinantal ideal (i.e.~it is defined by minors of a matrix; see \eqref{eq:detdef}) which has maximal depth, here 3. As a~consequence, it admits the following free resolution (known as Eagon–Northcott resolution; see \cite[Theorem A.2.60]{Eis05}):
\begin{multline}\label{eq:ENres}
0 \rightarrow R(-10)\oplus R(-8)\oplus R(-6) \\
\rightarrow R(-7)^4\oplus R(-5)^4 \rightarrow R(-4)^6 \xrightarrow{\psi} R
\end{multline}	
where the notation $R(k)$, $k\in \ZZ$, denotes a shift in the grading: $R(k)_\nu=R_{k+\nu}$ for all integers $\nu$ and $k$. 

The map $\psi$ in \eqref{eq:ENres} is defined by the generators \eqref{eq:172} of $J$, which are of degree 4. Therefore, the transpose of $\f F_{d}(\f c)$ is a~matrix of the graded component
\begin{equation}
  \psi_d: R_{d-4} \rightarrow R_d  
\end{equation} 
of $\psi$ in suitable monomial bases (namely $\f m_d$ for the rows and $6 \f m_{d-4}$ for the columns). Now, the cokernel of $\psi_d$ is equal to the Hilbert function of $R/J$ in degree $d$ (see \cite[Chapter 1]{Eis05}). As $J$ is a defining ideal of points in $\PP^3_\CC$ (observe that $J$ is saturated because it has a free resolution of length 3), the Hilbert function of $R/J$ is equal to the Hilbert polynomial of $R/J$ if and only if $d$ is greater or equal to the Castelnuovo-Mumford regularity of $R/J$ (see \cite[Theorem 4.2]{Eis05}). In view of \eqref{eq:ENres}, the regularity of $R/J$ is equal to $10-3=7$. Moreover, the Hilbert polynomial of $R/J$ is a constant which is equal to the number of points in $\PP^3_\CC$, counted with multiplicity, defined by $J$. In our setting, it is equal to the quantity
\begin{equation}
    n_d-6n_{d-4}+(4n_{d-5}+4n_{d-7})-(n_{d-6}+n_{d-8}+n_{d-10})
\end{equation}
for all $d\geq 7$, which is equal to 40.
\end{proof}

\begin{remark}\label{rem:E9}
Proposition \ref{prop:199} shows that the matrix $\f M_9$ in \eqref{eq:30} is indeed of rank $180=220-40$. This is a key property in Section \ref{subsec:M9} because it proves the claimed structure \eqref{eq:30} of the matrix $\f M_9$. Indeed, since equations \eqref{eq:172} are contained in the ideal $I$, equations \eqref{eq:F9} can be found in equations \eqref{eq:E9} by linear operations on rows.
\end{remark}

%

\subsection{Saturation and Sylvester forms}\label{subsec:Syl}

\def\sat{\mathrm{sat}}
\def\mm{\mathfrak{m}}
\def\NN{\mathbb{N}}

The closed-form solution reviewed in Section \ref{subsec:M9} is based on the equations \eqref{eq:E9} which correspond to the graded component $I_9$ of degree 9 of the graded ideal $I\subset R_\lambda:=\CC[\lambda][w,x,y,z]$ (the grading is with respect to the four variables $w,x,y,z$; $\lambda$ being a parameter, it is considered to be of degree 0). To explain the choice of the degree 9, we need to introduce the ideal obtained from $I$ by saturation with respect to the ideal $\mm:=(w,x,y,z)$.
\begin{definition}\label{defIsat} The saturation of the graded ideal $I$ with respect to $\mm:=(w,x,y,z)$ is the ideal
\begin{equation}
	I^{\sat}:=\{ p\in R_\lambda \ \textrm{ such that } \ \exists n\in \NN : \mm^n p \subset I\}
\end{equation}
\end{definition}
Clearly $I\subset I^{\sat}$. Moreover $I$ and $I^{\sat}$ are equal after inversion in $R_\lambda$ of $w,x,y$ or $z$, which means that $V(I)$ and $V(I^\sat)$ are equal, including their local algebraic structures (e.g.~multiplicities). In particular $(I)_d=(I^{\sat})_d$ for $d \gg 0$.

\medskip

Now, suppose given an integer $d$ and consider the matrix $\f E_d(\f c, \lambda)$, built similarly to \eqref{eq:E9}. We expect two properties for this matrix in order to solve the polynomial system \eqref{eq:e1-4}:
\begin{itemize}
	\item it is of (full) rank $n_d$ for general values of $\f c$ and $\lambda$,
	\item it is not full rank for some given $\f c$ and $\lambda$ if and only if the corresponding polynomial system has solutions in $\PP^3_\CC$.
\end{itemize}
It is a known result in elimination theory that these two properties hold for any $d$ such that $(I)_d=(I^{\sat})_d$ (see e.g.~\cite[Theorem 3.20]{BCP23}). In addition, since $I$ is generated by 4 equations of degree 3 in $w,x,y,z$, this latter property holds if $d\geq 4(3-1)+1=9$ (see e.g.~\cite[Lemma 2.2]{Buse2022}).

\medskip

To overcome the limitation $d\geq 9$ to obtain matrices $\f E_d(\f c, \lambda)$ with the expected properties, it is necessary to introduce new equations. Our strategy is to take those equations in $I^\sat$ so that the solution set $V(I)$ is unchanged. In addition, since we are targeting closed-form solutions, those equations must be given in closed-form in the coefficients $\f c$. For that purpose, we will use Sylvester forms that have been initially introduced in \cite{Jouanolou2} (see also \cite[§2.10]{Buse2022}).

\medskip

Given $\alpha = (\alpha_1,\alpha_2,\alpha_3,\alpha_4) \in \NN^4$, we set  
$|\alpha| = \sum_{i=1}^4 \alpha_i$
and we define
\begin{equation}
    \f q^\alpha := [w^{\alpha_1+1},x^{\alpha_2+1},y^{\alpha_3+1},z^{\alpha_4+1}]
\end{equation}
Since the polynomials $e_1, e_2 ,e_3$ and $e_4$ are homogeneous of degree 3 with respect to $\f q$, for any $\alpha$ such that $|\alpha|<3$ it is possible to find decompositions 
\begin{equation}\label{eq:pij}
    e_i = [h_{i,1}, h_{i,2}, h_{i,3}, h_{i,4}]\ (\f q^\alpha)^T, \ i=1,\ldots, 4
\end{equation}
where $h_{i,j} = h_{i,j}(\f q,\lambda) \in \CC[\lambda][w,x,y,z]$ are homogeneous polynomials of degree $3-\alpha_j-1$ in $\f q$. 

\begin{definition}\label{def:SylF} For any $\alpha\in \NN^4$ such that $|\alpha|<3$, the determinant
	\begin{equation}
	    S_\alpha = \mathrm{det} \left(
	    \left[\begin{array}{cccc}
	        h_{1,1} & h_{1,2} & h_{1,3} & h_{1,4} \\
	        h_{2,1} & h_{2,2} & h_{2,3} & h_{2,4} \\
	        h_{3,1} & h_{3,2} & h_{3,3} & h_{3,4} \\
	        h_{4,1} & h_{4,2} & h_{4,3} & h_{4,4} 
	    \end{array}\right] \right)
	\end{equation}
	is called a Sylvester form (of $e_1,\ldots,e_4$ with respect to $\f q$).
\end{definition}

By construction, Sylvester forms depend on $\f q, \lambda$ and $\f c$. More specifically, $S_\alpha$ is homogeneous of degree $8 - |\alpha|$ in $\f q$, homogeneous of degree 4 in $\f c$ and is of degree at most 4 in $\lambda$ (notice that it is not homogeneous with respect to $\lambda$). Also, by construction Sylvester forms belong to $I^\sat$.

Although Sylvester forms depend on decompositions \eqref{eq:pij}, which are not unique, they are essentially unique in the following sense.

\begin{proposition}\label{prop:SylF} For any $\alpha$ such that $|\alpha|<3$, the class of $S_\alpha$ modulo $I$ is independent of the choices of decompositions \eqref{eq:pij}. Moreover, for any integer $\nu\in\{0,1,2\}$, the set 
\begin{equation}
    \{ S_\alpha \textrm{ such that } |\alpha|=\nu \}
\end{equation}
form a basis of the free $\CC[\lambda,\f c]$-module $(I^\sat/I)_{8-\nu}$. 
\end{proposition}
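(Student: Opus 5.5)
This is a specialization of Jouanolou's theory of Sylvester forms for a complete intersection of four forms of degree $3$ in four variables, as revisited in \cite[§2.10]{Buse2022}. Since $e_1,\ldots,e_4$ depend on the coefficients $\f c$ and on $\lambda$, the plan is to work first over the generic coefficient ring $A:=\CC[\lambda,\f c]$, or over the universal ring of coefficients of four generic cubics, and then specialize.

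\emph{Independence of the choice.} Write $\f h$ for the $4\times 4$ matrix of Definition~\ref{def:SylF}, so that $\f e = \f h\,(\f q^\alpha)^T$. A second decomposition $\f h'$ differs from $\f h$ row by row by an element of the syzygy module of the regular sequence $w^{\alpha_1+1},x^{\alpha_2+1},y^{\alpha_3+1},z^{\alpha_4+1}$. That module is generated by Koszul syzygies, so $\f h'-\f h$ is a sum of elementary modifications. Each modification adds, to a single row, a vector $p\,(q_k^{\alpha_k+1}\epsilon_l - q_l^{\alpha_l+1}\epsilon_k)$. By multilinearity, the determinant then changes by $p$ times the determinant in which that row is replaced by this Koszul vector. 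Expanding along the modified row and using Cramer's rule (the adjugate identity $\mathrm{adj}(\f h)\f h=\det(\f h)\mathrm{Id}$, applied to $\f h(\f q^\alpha)^T=\f e$), the product $q_k^{\alpha_k+1}\cdot(\text{cofactor}) - q_l^{\alpha_l+1}\cdot(\text{cofactor})$ is a combination of the $e_i$. So the difference lies in $I$. The same Cramer identity, $q_j^{\alpha_j+1}S_\alpha \in I$ for each $j$, also shows $S_\alpha\in I^\sat$, which the paper has already noted.

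\emph{Basis statement.} Here I will invoke the structure theorem for $I^\sat/I$ when $I$ is generated by a regular sequence in the generic case, namely Jouanolou's duality (\cite[\S 3.10]{Jouanolou2}, \cite[§2.10]{Buse2022}). Let $\delta = 4(3-1) = 8$ be the critical degree. For $0\le \nu\le 2$, the module $(I^\sat/I)_{\delta-\nu}$ is then isomorphic to $\mathrm{Hom}_A((R_A/I^\sat)_\nu, A)$, which is dual to $R_\nu$ because $(I^\sat)_\nu = 0$ for $\nu<3$. It is therefore free of rank $n_\nu$, with a basis dual to the monomial basis, and Jouanolou's explicit description identifies that dual basis with the Sylvester forms $S_\alpha$, $|\alpha|=\nu$. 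There are exactly $n_\nu$ choices of $\alpha$ with $|\alpha|=\nu$, which matches. The key ingredients are the Koszul resolution of $R_A/I$ and the local cohomology spectral sequence: $H^0_\mm(R_A/I)\simeq H^4_\mm$ of the Koszul complex, which yields $H^0_\mm(R_A/I)_{8-\nu}\simeq \mathrm{Hom}(R_\nu,A)$ via the Morley form and residue pairing.

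\emph{Main obstacle.} The hard step is transferring the generic statement to the specific family $\f e(\f q,\lambda,\f c) = \hat{\f g}(\f q)-\lambda(\f q^\top\f q)\f q$, in which $\lambda$ enters in a structured way. What is needed is that $e_1,\ldots,e_4$ still form a generically regular sequence in $R_A$, i.e.\ have no common root for general $(\f c,\lambda)$; this holds since $V(I)$ is finite and projects to finitely many $\lambda$. It then remains to check that the Koszul complex stays acyclic, so that its formation commutes with base change. I would verify this via the depth criterion: the resultant of $e_1,\ldots,e_4$ is a nonzero polynomial in $A$, which guarantees that the local cohomology description specializes. Freeness over $\CC[\lambda,\f c]$ then follows from the universal case by base change, because the modules $H^4_\mm$ of the Koszul complex are free in the relevant degrees.
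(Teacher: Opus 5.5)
Your route is the paper's route: its proof is only a pointer to \cite[Proposition 2.11]{Buse2022}. Your independence argument is fine, except that it needs a different identity. The adjugate identity for the full matrix $\f h$ only gives $q_j^{\alpha_j+1}S_\alpha\in I$. What you need is Cramer's rule for the $3\times 4$ matrix $H$ of the untouched rows. With $v=(\f q^\alpha)^T$ and $C_{i,j}$ the cofactors along row $i$, one has $v_kC_{i,l}-v_lC_{i,k}=\pm\det[\,H_{\hat k\hat l}\mid Hv\,]\in(e_r : r\neq i)$, where $H_{\hat k\hat l}$ is $H$ with columns $k,l$ removed.

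In the basis part, however, the duality is misstated. It reads $(I^{\mathrm{sat}}/I)_{8-\nu}\cong\mathrm{Hom}_A((R_A/I)_\nu,A)$, with $I$ and not $I^{\mathrm{sat}}$. Your supporting claim $(I^{\mathrm{sat}})_\nu=0$ for $\nu<3$ is false over $A=\CC[\lambda,\f c]$. The resultant of $e_1,\dots,e_4$ is a nonzero inertia form ($\mm^N\mathrm{Res}\subset I$ for $N\gg 0$), so $0\neq\mathrm{Res}\in(I^{\mathrm{sat}})_0$. Hence $(R_A/I^{\mathrm{sat}})_\nu$ is $\mathrm{Res}$-torsion, and your formula would return the zero module. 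The correct reason for freeness of rank $n_\nu$ is simply $I_\nu=0$ for $\nu\le 2$.

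The more serious gap is the transfer step. A nonzero resultant does not make $e_1,\dots,e_4$ a regular sequence in $A[w,x,y,z]$. Without regularity, neither the Koszul/local-cohomology identification nor the Sylvester basis survives, and freeness of the $H^4_{\mm}$ terms does not help. Likewise, ``no common root for general $(\f c,\lambda)$'' is only generic information, while the claim concerns a module over all of $\CC[\lambda,\f c]$. For example, over $A=\CC[s]$ the linear forms $sx,sy,z$ have resultant $s^2\neq 0$. Yet $x\cdot sy\in(sx)$ with $x\notin(sx)$, and one checks $(I^{\mathrm{sat}}/I)_0=sA$ while the Sylvester form is $S_0=s^2$.

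What you are missing is that there is no obstacle at all here. The $80$ entries of $\f c$ are independent indeterminates, and $\lambda$ only shifts some of them. So $c_{i,j}\mapsto c_{i,j}-\lambda\mu_{i,j}$, with $\mu_{i,j}\in\{0,1\}$ the coefficients of $(\f q^\top\f q)\f q$, is a $\CC[\lambda]$-algebra automorphism of $\CC[\lambda,\f c]$. It carries four generic cubics to $e_1,\dots,e_4$. Thus the system is the base change of the universal one along the flat map $A_{\mathrm{univ}}\to A_{\mathrm{univ}}[\lambda]$, where $A_{\mathrm{univ}}$ is the universal coefficient ring of four cubics. Regularity, $H^0_{\mm}(R_A/I)=I^{\mathrm{sat}}/I$ and Sylvester forms all commute with this base change. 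The statement is then literally the universal one of \cite[Proposition 2.11]{Buse2022}; this is the same substitution the paper uses in the proof of Lemma~\ref{lem:brackets}.
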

\begin{proof} We refer to \cite[Proposition 2.11]{Buse2022} and the references therein. 
\end{proof}

As a consequence of the above result, Sylvester forms can be added to the ideal $I$ to obtain a new ideal that have the same saturation but being itself saturated in a smaller degree. 
In what follows, we exploit this property to build new closed-form solutions to our problem.

\subsection{Closed-form solution in degree 8}

Consider the ideal $I'$ generated by the equations \eqref{eq:e1-4} and a~Sylvester from $S_0:=S_{(0,0,0,0)}$, i.e.~$I':=I+(S_0)$. According to Proposition \ref{prop:SylF}, 
\begin{equation}\label{eq:Ipsat}
(I')_d=(I^\sat)_d=(I')^\sat_d \ \textrm{ for all } d\geq 8
\end{equation}
Therefore, as explained in Section \ref{subsec:Syl}, the matrix $\f E_8'(\f c,\lambda)$ built from a basis of $(I')_8$ will have the expected properties. This matrix is actually the matrix  $\f E_8 (\f c,\lambda)$ to which an additional row corresponding to $S_0$ is added. 

In practice, a key ingredient is the choice made to compute $S_0$. Its degree with respect to $\lambda$ is of particular importance. To proceed, we consider a  decomposition of the polynomials $e_1,e_2,e_3,e_4$ (corresponding to rows) with respect to $\f q ^{(1,1,0,0)}=[w^2,x^2,y,z]$ such that:
{\small
\begin{equation}\label{eq:decS0}
\left(
\begin{array}{cccc}
p_{1,1}-\lambda w & p_{1,2} - \lambda w  & p_{1,3} - \lambda wy & p_{1,4} - \lambda wz \\
p_{2,1}-\lambda x & p_{2,2} - \lambda x  & p_{2,3} - \lambda xy & p_{2,4} - \lambda xz \\
p_{3,1}-\lambda y & p_{3,2} - \lambda y  & p_{3,3} - \lambda y^2 & p_{3,4} - \lambda yz \\	
p_{4,1}-\lambda z & p_{4,2} - \lambda z  & p_{4,3} - \lambda yz & p_{4,4} - \lambda z^2 
\end{array}
\right)
\end{equation}
}

\noindent where the $p_{i,j}$'s are polynomials in $\f q$ and $\f c$. The determinant of the above matrix is actually a Sylvester form $S_{(1,1,0,0)}$, which we denote by $S_{wx}$. Using classical rules of determinants, it appears that $S_{wx}$ is linear in $\lambda$. Moreover, a similar decomposition with respect to $\f q ^{(0,0,0,0)}=[w,x,y,z]$ is easily obtained from \eqref{eq:decS0} by multiplication of the first and second columns by $w$ and $x$ respectively. Therefore, by Definition \ref{def:SylF}, we may take $S_0:=wxS_{wx}$. It follows that $S_0$ can be written as
\begin{equation}\label{eq:pdk}
   p^{(8,4)}(\f q,\f c) - \lambda p^{(8,3)}(\f q, \f c)
\end{equation}
where $p^{(8,k)}$ is a polynomial of homogenous degree $8$ in $\f q$ and of homogenous degree $k$ in $\f c$, with $k = 3,4$. The following result is important in practice for the efficiency of the evaluation of $S_0$.

\begin{lemma}\label{lem:brackets} With the above notation, we write 
	\begin{equation}\label{eq:S_0}
	    p^{(8,k)} = \sum_{j=1}^{n_8} u_{j,k}\ m_j \   \textrm{ with } m_j \in \f m_8	
	\end{equation}	
and we assume that the decomposition \eqref{eq:decS0} is chosen such that for all $j=1,\ldots,4$, the polynomials $p_{1,j},\ldots,p_{4,j}$ have the same monomial supports (i.e.~the same set of monomials with nonzero coefficient). Then, the coefficients $u_{j,k}$ are $k\times k$ minors of the matrix 
\begin{equation}\label{eq:Cmatrix}
    \f C = \left[\begin{array}{cccc}
        c_{1,1} & c_{2,1} & \dots & c_{1,20} \\
        c_{2,1} & c_{2,2} & \dots & c_{2,20} \\
        c_{3,1} & c_{3,2} & \dots & c_{3,20} \\
        c_{4,1} & c_{4,2} & \dots & c_{4,20} 
    \end{array}\right]
\end{equation}
of the coefficients of the equations \eqref{eq:93}.
\end{lemma}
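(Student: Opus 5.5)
The plan is to expand the determinant of \eqref{eq:decS0} by multilinearity in its columns and to use the common-support hypothesis to rewrite each column as a coefficient matrix times a fixed monomial vector. Since $\hat{\f g}$ has coefficient vector $\f c$, with row $i$ of $\f C$ holding the 20 coefficients of $\hat g_i$ in the monomial basis $\f m_3$, each $p_{i,j}$ is obtained from $\hat g_i$ by a fixed, $\lambda$-free rule. This rule assigns each cubic monomial to one column $j$ and divides it by the corresponding entry of $\f q^{(1,1,0,0)}$. The hypothesis of the lemma says this assignment can be taken independent of $i$: there is a partition $\f m_3 = T_1\sqcup T_2\sqcup T_3\sqcup T_4$ such that $p_{i,j} = \sum_{t\in T_j} c_{i,t}\, t/(\f q^{(1,1,0,0)})_j$. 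Writing column $j$ of the $p$-part of \eqref{eq:decS0} as $\f C_{T_j}\,\boldsymbol\mu_j$, where $\f C_{T_j}$ is the $4\times|T_j|$ submatrix of $\f C$ on the columns indexed by $T_j$ and $\boldsymbol\mu_j$ is the vector of the quotient monomials, is the first step.

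Next, write the $\lambda$-part of column $j$ as $-\lambda\, \f q\, \ell_j$, where $\ell_j\in\{1,1,y,z\}$ is a monomial. Every $\lambda$-column is therefore a multiple of the single vector $\f q=[w;x;y;z]$, so any term of the multilinear expansion containing two $\lambda$-columns vanishes. This gives
\[
\det = \det[\f C_{T_1}\boldsymbol\mu_1,\dots,\f C_{T_4}\boldsymbol\mu_4] - \lambda\sum_{j=1}^4 \ell_j \det[\dots,\f q,\dots],
\]
with $\f q$ in position $j$. Multiplying by $wx$ identifies the two terms with $p^{(8,4)}$ and $p^{(8,3)}$ up to sign.

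For the first term, multilinearity in each column gives
\[
\det[\f C_{T_1}\boldsymbol\mu_1,\dots] = \sum_{t_1\in T_1,\dots,t_4\in T_4} \mu_{t_1}\cdots\mu_{t_4}\,\det[\f c_{t_1},\dots,\f c_{t_4}],
\]
where $\f c_t$ denotes a column of $\f C$. Each determinant $\det[\f c_{t_1},\dots,\f c_{t_4}]$ is a $4\times4$ minor of $\f C$, possibly zero if two indices coincide, although here they cannot because the $T_j$ are disjoint. Grouping terms by the resulting monomial of degree 8 in $\f q$ shows that $u_{j,4}$ is a sum of signed $4\times 4$ minors.

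For the second term, expanding $\det[\dots,\f q,\dots]$ along the column $\f q$ gives $\sum_i \pm q_i \cdot(\text{$3\times3$ minor of rows } \neq i)$. Expanding the remaining three columns multilinearly as before shows that each such minor is $\sum \mu\mu\mu\,\cdot$ (a $3\times 3$ minor of $\f C$ with rows $\{1,2,3,4\}\setminus\{i\}$ and columns $t_{j'}$).

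The main obstacle is the precise meaning of the claim. Collecting terms gives $u_{j,k}$ as a linear combination, with coefficients $\pm1$, of $k\times k$ minors, rather than a single minor. I would check whether different tuples $(t_1,\dots,t_4)$ can produce the same monomial $\prod\mu_{t_j}$ and still give distinct minors. If they can, the lemma should be read as ``linear combinations of minors'' (bracket polynomials), which is what the label suggests, and I would state it that way. I would also verify that the multiplication by $wx$ and the column scalings do not introduce extra coefficients. This holds because they are monomial factors independent of $\f c$.
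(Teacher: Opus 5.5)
Your argument is correct and complete, but it takes a different route from the paper's.

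\textbf{The paper's route.} It uses invariant theory. At $\lambda=0$, the common-support hypothesis makes left multiplication of $\f C$ by $\mathrm{SL}(4,\CC)$ act by left multiplication on the matrix \eqref{eq:decS0}. Its determinant is therefore an $\mathrm{SL}(4)$-invariant of $\f C$, and by the First Fundamental Theorem it is a polynomial in the $4\times 4$ minors. For $\lambda\neq 0$, the paper substitutes $c_{i,t}\mapsto c_{i,t}-\lambda$ for the monomials $t=q_iw^2,q_ix^2,q_iy^2,q_iz^2$ of row $i$, expands, and discards the $\lambda^2,\dots,\lambda^4$ terms using the separately asserted linearity of $S_{wx}$ in $\lambda$.

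\textbf{What your expansion buys.} Your multilinear (Cauchy--Binet type) expansion replaces all of this with a direct computation.
\begin{itemize}
\item It is elementary and explicit: it says exactly which minors enter each $u_{j,k}$, with signs $\pm1$. That is what the efficient evaluation motivating the lemma needs.
\item Your remark that every $\lambda$-column of \eqref{eq:decS0} is a multiple of $\f q$ is precisely the ``classical rule of determinants'' the paper invokes for linearity in $\lambda$. Laplace expansion along that column then yields the $3\times3$ minors, with no cancellation argument.
\item It works for every admissible partition $T_1\sqcup\dots\sqcup T_4$. The paper's substitution step tacitly requires each $q_iq_j^2$ to lie in the $T_j$ where \eqref{eq:decS0} places the corresponding $\lambda$-term (e.g.\ $w^2y\in T_1$ rather than $T_3$).
\end{itemize}
The paper's route, in exchange, explains conceptually why brackets appear (invariance under recombining the equations $e_i$). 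It would also apply unchanged to any construction that is equivariant in the same way.

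\textbf{Your open question.} It resolves in favour of the ``linear combination'' reading. The monomials $w^3,w^2x$ must lie in $T_1$ and $x^3,wx^2$ in $T_2$, since no other entry of $[w^2,x^2,y,z]$ divides them. Moreover $\mu_{w^3}\mu_{x^3}=wx=\mu_{w^2x}\mu_{wx^2}$. Because the $T_j$ are disjoint, each set of columns of $\f C$ arises from at most one tuple, so these distinct minors cannot cancel. The coefficient of the resulting monomial is therefore a sum of at least two distinct minors, not a single minor. This is also all the paper's proof gives: a polynomial in the $4\times4$ minors that is homogeneous of degree $4$ in $\f c$ is linear in them.
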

\begin{proof} We first assume that $\lambda=0$ in the polynomials $e_1,\ldots,e_4$ and hence in the decomposition \eqref{eq:decS0}. Consider the action of the special linear group $\mathrm{SL}(4,\CC)$ ($4\times 4$ matrices with determinant equal to 1) on the matrix $C$ given by matrix multiplication (on the left). By our assumptions on the choices of $p_{i,j}$, this action also corresponds to left multiplication on the matrix \eqref{eq:decS0}. Therefore, the determinant of this latter matrix is invariant under the action of $\mathrm{SL}(4,\CC)$. Applying the First Fundamental Theorem of Invariant Theory (see e.g.~\cite[Section 3.2]{Stu}), we deduce that it is a polynomial in the $4\times 4$-minors of \eqref{eq:Cmatrix} (these minors are themselves invariant under this action and actually generate the ring of invariants under this action). 
	
Now, the case where $\lambda$ is nonzero can be obtained from the previous case by substituting some coefficients $c_{i,j}$ by $c_{i,j}-\lambda$. We deduce that the determinant $S_{wx}$ is a polynomial of the $4\times 4$-minors of \eqref{eq:Cmatrix} after such substitutions. Expanding these minors, we obtain $4\times 4$-minors of the plain matrix \eqref{eq:Cmatrix}, then $\lambda$ multiplied by 
$3\times 3$-minors of the plain matrix \eqref{eq:Cmatrix} and terms depending on $\lambda^2$ up to $\lambda^4$. However, taking into account that $S_{wx}$ is linear in $\lambda$, all the terms that are not linear in, or independent of, $\lambda$ must cancel.
\end{proof}

We are now ready to state a new method to find the solutions of the system of equations \eqref{eq:93}. From a basis of $(I)_8$ we get $4\cdot 56 = 224$ equations 
\begin{equation}
    \f e \otimes \f m_5 =\f E_8(\f c,\lambda)\ \f m_8^T = 0
\end{equation}
The matrix $\f E_8(\f c,\lambda)$ is of size $224 \times 165$. We add to it a~single row $\f S$ from \eqref{eq:S_0}, i.e.~such that $S_0=\f S \f m_8^T$, to get the matrix $\f E_8'(\f c,\lambda)$ of size $225 \times 165$. In this way, the rows of $\f E_8'$ correspond to a basis of $(I')_8$. From \eqref{eq:Ipsat}, we deduce that $\f E_8'$ has full rank 165 for a general choice of $\lambda$. 

From equations \eqref{eq:172}, we get the $6 \cdot 35 = 210$ equations
\begin{equation}
    \f f \otimes \f m_4 =\f F_8(\f c)\ \f m_8^T = 0 
\end{equation}
where $\mathrm{size}(\f F_8(\f c)) = 210 \times 165$. By Proposition \ref{prop:199}, $\mathrm{rank}(\f F_8(\f c)) = 125$. Moreover, we already noticed that the rows of $\f F_8$ can be obtained in $\f E_8$ by linear operations on rows. Therefore, we proved that there exists a matrix $\f M_8$ of the form
{\small
\begin{equation}
    \f M_8 = \left[\begin{array}{cc}
         \f A & \f B \\
         \f C & \f D 
    \end{array}\right] = 
    \left[\begin{array}{cc}
         \f A_0 & \f B_0 \\
         \f C_0 & \f D_0 
    \end{array}\right] - \lambda
    \left[\begin{array}{cc}
         \f A_1 & \f B_1 \\
         \f 0 & \f 0
    \end{array}\right]
\end{equation}
}

\noindent where the matrix $[\f A\  \f B]$ consists of 39 rows of the matrix $\f E_8(\f c,\lambda)$ and the row $\f S$. The matrix $[\f C\  \f D]$ consists of 125 rows of the matrix $\f F_8(\f c)$. 
In particular, $\mathrm{size}(\f A) = 40 \times 40$, $\mathrm{size}(\f B) = 40 \times 125$, $\mathrm{size}(\f C) = 125 \times 40$ and $\mathrm{size}(\f D) = 125 \times 125$. Furthermore, $\mathrm{rank}(\f M_8) = 165$, $\mathrm{rank}(\f A) = 40$, and $\mathrm{rank}(\f D) = 125$ for a general choice of $\lambda$. 

The solutions of the system \eqref{eq:93} can be obtained by solving the generalised eigenvalue problem, following the procedure described in \eqref{eq:48}-\eqref{eq:64}.

\subsection{Closed-form solution in degree 7}

Following the same strategy as in the previous section, we develop a closed-form solution by using Sylvester forms of degree $7$ in $\f q$. We choose four Sylvester forms $S_\alpha$, for all $\alpha$ such that $|\alpha|=1$, and consider the ideal $I{''}$ defined as the ideal $I$ to which these four Sylvester forms are added. According to Proposition \ref{prop:SylF}, 
\begin{equation}\label{eq:IIpsat}
(I{''})_d=(I^\sat)_d=(I{''})^\sat_d \ \textrm{ for all } d\geq 7
\end{equation}

In practice, to choose appropriately our four Sylvester forms, we first compute two Sylvester forms $S_{(1,0,1,0)}$ and $S_{(0,1,0,1)}$ of degree 6 using decompositions similar to \eqref{eq:decS0} and then we set 
\begin{multline}\label{eq:263}
S_w:=yS_{(1,0,1,0)}, \ S_x:=zS_{(0,1,0,1)}, \\ S_y:=wS_{(1,0,1,0)}, \ S_z:=xS_{(0,1,0,1)}	
\end{multline}


%

The polynomials $S_w, S_x, S_y, S_z$ are of homogenous degree 7 in $\f q$, homogeneous of degree 4 in the coefficients $\f c$ and linear in $\lambda$. Thus, we can write them similarly to \eqref{eq:pdk}; for instance
\begin{equation}\label{eq:pw}
    S_w(\f q,\lambda) = p^{(7,4)}_w(\f q,\f c) - \lambda p^{(7,3)}_w(\f q, \f c) 
\end{equation}
where $p^{(7,k)}_w$ is a homogeneous polynomial of degree $7$ in $\f q$ and homogenous of degree $k$ in $\f c$ for $k = 3,4$. Similar expressions hold for $S_x, S_y$ and $S_z$. Lemma \ref{lem:brackets} also applies here so that the coefficients of these polynomials are $k\times k$ minors of the coefficient matrix $\f C$, which improves the practical efficiency of their evaluation.  


\medskip

Putting all the above ingredients together, we obtain another closed-form solution to find the solutions of the system of equations \eqref{eq:93}. From a basis of $(I)_7$, we construct $4\cdot 35 = 140$ equations 
\begin{equation}
    \f e \otimes \f m_4 =\f E_7(\f c,\lambda)\ \f m_7^T = 0 
\end{equation}
The matrix $\f E_7(\f c,\lambda)$ is of size $140 \times 120$. We add to it four rows obtained with the coefficients of $S_w, S_x, S_y$ and $S_z$ by means of expressions \eqref{eq:pw}. We obtain the matrix $\f E_7^{''}(\f c,\lambda)$ of size $144 \times 120$ which has full rank 120 for a general choice of $\lambda$ (which is ensured by \eqref{eq:IIpsat}). 
Now, equations \eqref{eq:172} yields $6 \cdot 20 = 120$ equations
\begin{equation}
    \f f \otimes \f m_3 =\f F_7(\f c)\ \f m_7^T = 0 
\end{equation}
where $\mathrm{size}(\f F_7(\f c)) = 120 \times 120$ and $\mathrm{rank}(\f  F_7(\f c)) = 80$ by Proposition \ref{prop:199}. It follows that one can construct a matrix $\f M_7$ of the form
{\small
\begin{equation}
    \f M_7 = \left[\begin{array}{cc}
         \f A & \f B \\
         \f C & \f D 
    \end{array}\right] = 
    \left[\begin{array}{cc}
         \f A_0 & \f B_0 \\
         \f C_0 & \f D_0 
    \end{array}\right] - \lambda
    \left[\begin{array}{cc}
         \f A_1 & \f B_1 \\
         \f 0 & \f 0
    \end{array}\right]
\end{equation}
}

\noindent where the matrix $[\f A\  \f B]$ consists of a 36 rows of the matrix $\f E_7(\f c,\lambda)$ and 4 rows of the coefficients of the polynomials $S_w, S_x,S_y, S_z$. The matrix $[\f C\  \f D]$ consists of 80 rows of the matrix $\f F_7(\f c)$. 
In particular, $\mathrm{size}(\f A) = 40 \times 40$, $\mathrm{size}(\f B) = 40 \times 80$, $\mathrm{size}(\f C) = 80 \times 40$ and $\mathrm{size}(\f D) = 80 \times 80$. Furthermore, $\mathrm{rank}(\f M_7) = 120$, $\mathrm{rank}(\f A) = 40$, and $\mathrm{rank}(\f D) = 80$ for a~general choice of $\lambda$. 

The solutions of the system \eqref{eq:93} can be obtained by solving the generalised eigenvalue problem, following the procedure described in \eqref{eq:48}-\eqref{eq:64}.

\subsection{Constructing matrices $\textbf Q_0$ and $\textbf Q_1$}
To solve the generalized eigenvalue problem, we need to construct matrices $\textbf Q_0$ and $\textbf Q_1$, see \eqref{eq:64}. In \cite{Malis2024}, these matrices are obtained by computing Schur complements of the matrix $\textbf M_9$, which is composed of fixed subsets of rows of the coefficient matrices $\textbf E_9$ and $\textbf F_9$. This fast construction, however, can fail if the block $\textbf D$ of the matrix $\textbf M_9$ (see \eqref{eq:30}) is not invertible. Here, we propose a  procedure for the construction of the matrices $\textbf Q_0$ and $\textbf Q_1$, which outputs matrices with the best conditioning for a given choice of the monomial order. To unify the notation, here we set $\textbf E'_7 := \textbf E''_7$. 

\begin{algorithm}[h]
\small 
\caption{Optimized selection of rows and Schur complements}
\DontPrintSemicolon

\KwIn{Coefficient matrices $\mathbf{E}'_d$ and $\mathbf{F}_d$, monomials $\mathbf{m}_d$ ($d = 7,8$)}
\KwOut{Matrices $\mathbf{Q}_0$ and $\mathbf{Q}_1$}

\BlankLine
Fix a monomial order on $\mathbf{m}_d$ and reorder the columns of $\mathbf{E}'_d$ and $\mathbf{F}_d$ (the first 40 monomials index the columns of $\overline{\mathbf{A}_0}, \overline{\mathbf{A}_1}$ and $\overline{\mathbf{C}_0}$)\;

Define blocks such that $[\overline{\mathbf{A}_0} \ \overline{\mathbf{B}_0}] + \lambda [\overline{\mathbf{A}_1} \ \overline{\mathbf{B}_1}] = \mathbf{E}'_d$ and $[\overline{\mathbf{C}_0}\ \overline{\mathbf{D}_0}] = \mathbf{F}_d$\;

Compute the QR decomposition $\overline{\mathbf{D}_0} = \mathbf{Q}_D \mathbf{R}_D$\;

Set $\mathbf{C}_0 = \mathbf{Q}_D^T \overline{\mathbf{C}}$ and $\mathbf{D}_0 = \mathbf{Q}_D^T \overline{\mathbf{D}} = \mathbf{R}_D$\;

Compute Schur complements $\overline{\mathbf{Q}_i} = \overline{\mathbf{A}_i} - \overline{\mathbf{B}_i} \mathbf{D}_0^{-1} \mathbf{C}_0$ for $i = 0,1$\;

Compute the QR decomposition $\mathbf{Q}_Q \mathbf{R}_Q = \overline{\mathbf{Q}_1}$\;

Let $\mathbf{Q}_0 = \mathbf{Q}_Q^T \overline{\mathbf{Q}_0}$ and $\mathbf{Q}_1 = \mathbf{Q}_Q^T \overline{\mathbf{Q}_1} = \mathbf{R}_Q$\;
\end{algorithm}
\begin{remark}
    Both matrices $\overline{\mathbf{D}_0}$ and $\overline{\mathbf{Q}_1}$ are tall matrices, i.e., they have more rows than columns. The \emph{economical} QR decomposition (e.g. \emph{qr($\overline{\mathbf{D}_0}$,"econ")} in Matlab) outputs an orthonormal matrix $\mathbf Q$ and a square, upper triangular matrix $\textbf R$ with non-zero entries on the diagonal. The latter one can be inverted efficiently and is reused in the later steps of the procedure.
\end{remark}
\section{Simulations and experimental results}

We apply the methods to simulated and real data, comparing them in terms of accuracy and computational time with the state-of-the-art methods for the pose estimation problem from 3D to 3D as well as from 3D to 2D correspondences.

The solving process can be divided into three steps:
\begin{enumerate}
    \item Derivation of the problem in the "canonical" form \eqref{eq:66}. 
    \item Computation of the solutions of the polynomial system given by equations \eqref{eq:81} and \eqref{eq:82}.
    \item Selection of the minimal solution, i.e. of the solution for which \eqref{eq:66} is minimal.
\end{enumerate}
The computational time presented in the results only reflects the time necessary for the second step, because the times needed for the first and final steps are identical for all the presented methods.

\subsection{Pose estimation from 3D to 3D correspondences}

We compare the proposed methods with the state-of-the-art closed-form h-resultant based algorithm by Malis \cite{Malis2024} and the algorithms by Wientapper et al. \cite{Wientapper-CVIU-2018} and by Zhou, Wang, and Kaess \cite{Zhou-ICRA-2020} (without the Newton-Raphson iterations to refine the results).

\subsubsection{Experiments with simulated data}
The data are generated using the simulation described in \cite{Malis2024}. In particular, 3D points are randomly sampled on a 10m radius sphere. For lines and planes, unit direction vectors and normal vectors are generated randomly. Rotations are generated by uniformly sampling the Euler angles $\phi$ and $\theta$, where $\phi,\theta \in [0, 2\pi]$ and $\theta \in [0,\pi]$. The translation vectors are uniformly distributed within the range $[-10 \mathrm{m} ,10 \mathrm{m}]$.

For $n_m$ point-to-point, $n_l$ point-to-line, and $n_p$ point-to-plane correspondences, we denote $N = 3n_m + 2n_l + n_p$. Given $N$, a combination of point-to-point, point-to-line and point-to-plane correspondences is randomly generated.

We compare the estimated rotation $\hat{\f R}$ and translation $\hat{\f t}$ to the ground truth rotation $\f R$ and translation $\f t$. The rotation error $\delta r$ is the absolute value of the rotation angle computed as $\delta r = \|\mathrm{log}(\hat{\f R}\f R^T)\|_F$, where log denotes the logarithm of a rotation matrix and $\| \cdot \|_F$ denotes the Frobenius norm that gives the magnitude of the rotation angle. The translation error is $\delta t = \|\hat{\f t}- \f t\|$. The median computation time, the average rotation error $\mu(\delta r)$ and the average translation error $\mu(\delta t)$ are computed for 1000 trials.

\noindent \textbf{Accuracy comparison.}
In the first simulation, we do not add any noise to the simulated data in order to check the numerical sensitivity of the algorithms. Figure \ref{fig:noise0} presents the average rotation and average translation errors for an increasing number of correspondences. In  the absence of noise, the proposed methods in degree 7 and 8 are  numerically more stable than the degree 9 method, perform similarly to \cite{Wientapper-CVIU-2018} and significantly outperform \cite{Zhou-ICRA-2020}. 
\begin{figure}[t]
\captionsetup[subfigure]{labelformat=empty}
\centering
\begin{subfigure}[b]{0.5\linewidth}
\centering
\includegraphics[width=0.95\linewidth]{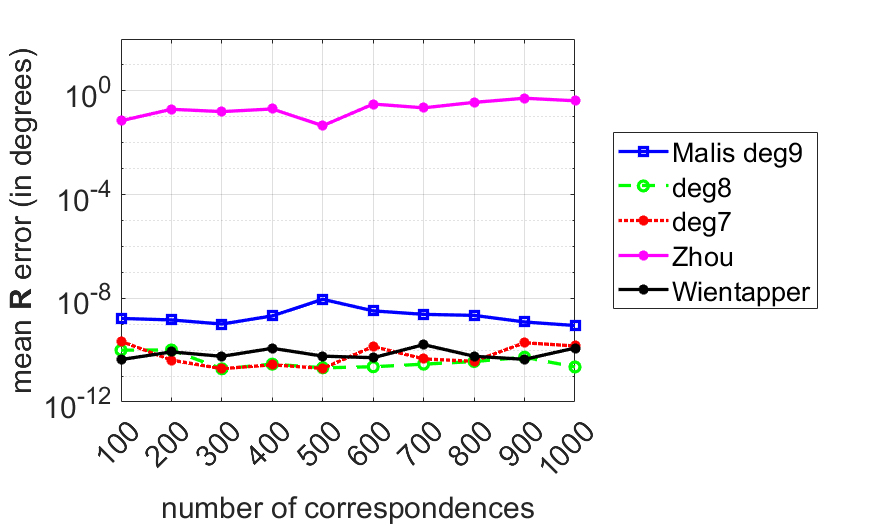}
\caption{Rotation error (degrees).}
\end{subfigure}%
\begin{subfigure}[b]{0.5\linewidth}
\centering
\includegraphics[width=0.95\linewidth]{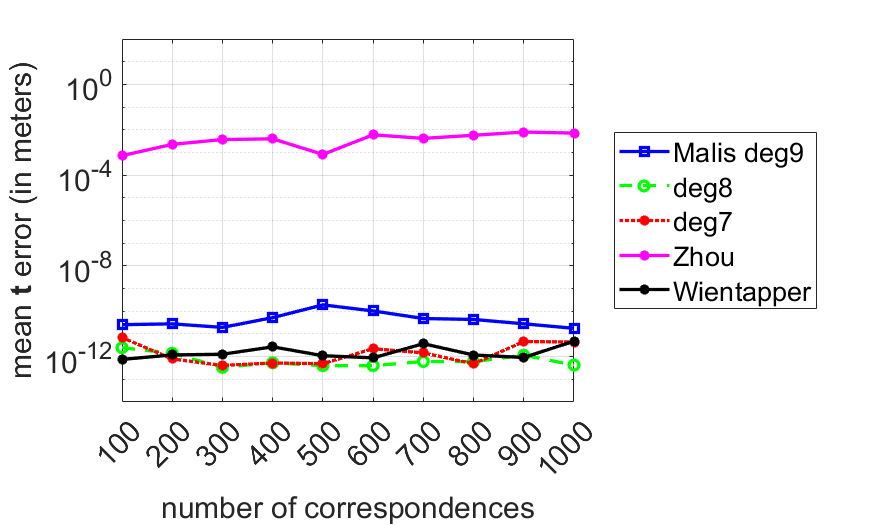}
\caption{Translation error (m).}
\end{subfigure}
\caption{Number of correspondences increases from 100 to 1000, the noise standard deviation is set to 0 m.}
\label{fig:noise0}
\end{figure}
Figure \ref{fig:noiseInc} shows the average rotation and translation error for $1000$ correspondences and an increasing level of noise. 
\begin{figure}[t]
\captionsetup[subfigure]{labelformat=empty}
\centering
\begin{subfigure}[b]{0.5\linewidth}
\centering
\includegraphics[width=0.75\linewidth]{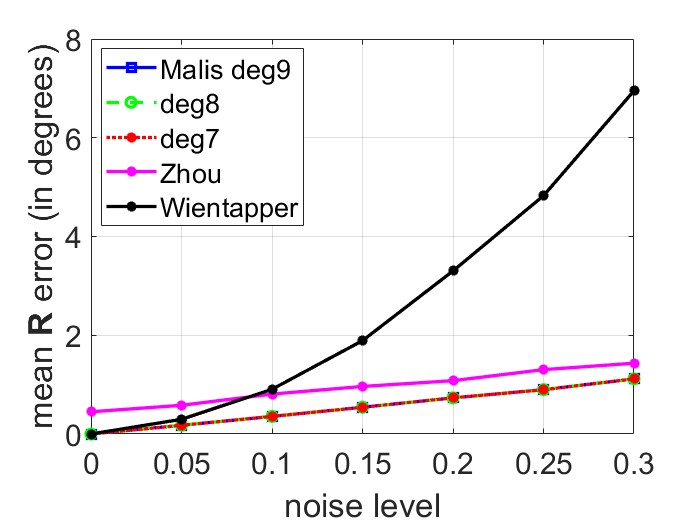}
\caption{Rotation error (degrees).}
\end{subfigure}%
\begin{subfigure}[b]{0.5\linewidth}
\centering
\includegraphics[width=0.75\linewidth]{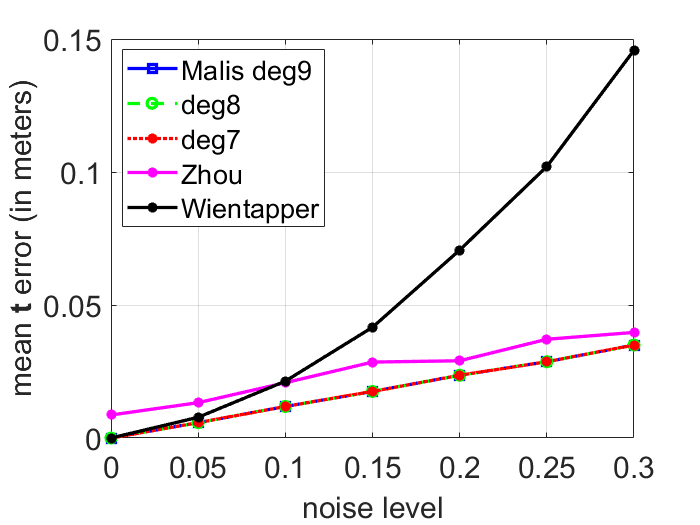}
\caption{Translation error (m).}
\end{subfigure}
\caption{Noise standard deviation increases from 0 to 0.3 m, the number of correspondences is set to 1000.}
\label{fig:noiseInc}
\end{figure}
Figure~\ref{fig:noise02} presents the average rotation and translation error for increasing number of correspondences. The noise standard deviation is set to $\sigma=0.2$m. In the simulations shown in Figures \ref{fig:noiseInc} and \ref{fig:noise02}, the proposed methods perform as well as the algorithm from \cite{Malis2024} and outperform methods \cite{Zhou-ICRA-2020} and \cite{Wientapper-CVIU-2018}.

\begin{figure}[t]
\captionsetup[subfigure]{labelformat=empty}
\centering
\begin{subfigure}[b]{0.5\linewidth}
\centering
\includegraphics[width=0.95\linewidth]{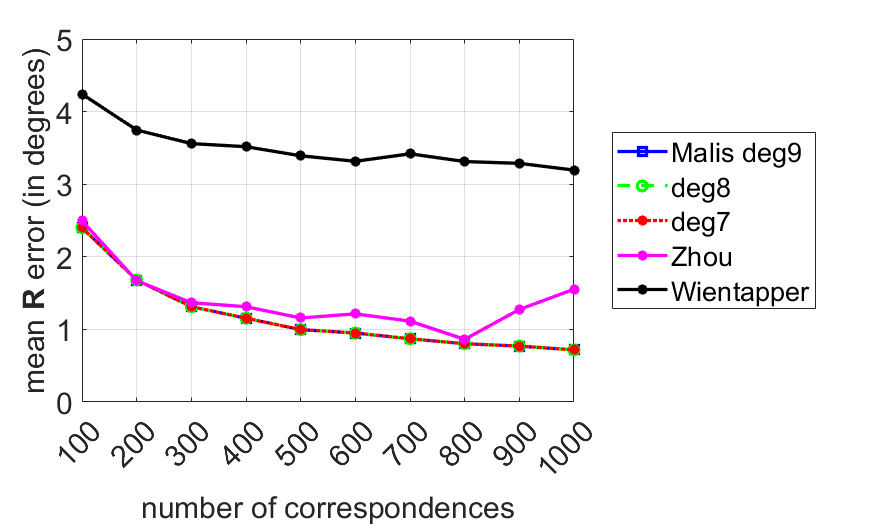}
\caption{Rotation error (degrees).}
\end{subfigure}%
\begin{subfigure}[b]{0.5\linewidth}
\centering
\includegraphics[width=0.95\linewidth]{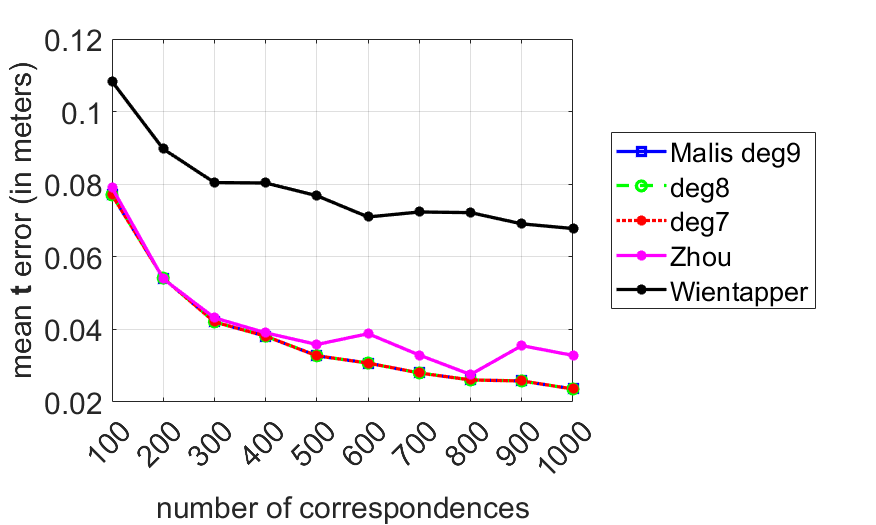}
\caption{Translation error (m).}
\end{subfigure}
\caption{Number of correspondences increases from 100 to 1000, the noise standard deviation is set to 0.2 m.}
\label{fig:noise02}
\end{figure}

\noindent \textbf{Computational time comparison.}
In the proposed methods, it is necessary to evaluate the coefficients of the polynomials $S_i$. 
Experimentally, we observed that there is no notable difference in terms of time or numerical stability for different choices of the Sylvester forms of degree 6 and the decompositions \eqref{eq:pij} of the polynomials \eqref{eq:172}. 

Figure \ref{fig:time} shows the median computational time for $N$ varying between 10 and 3000, with $\sigma = 0.2$m. For the sake of visibility, the results are separated into two figures. 
Our method with $d = 7$ outperforms the method by Malis \cite{Malis2024}, which uses a fixed subset of rows to construct the matrix $\textbf M_9$. Both proposed methods are faster that the method \cite{Zhou-ICRA-2020} and \cite{Wientapper-CVIU-2018}. We used a Matlab wrapper of the C++ implementation of the algorithm \cite{Wientapper-CVIU-2018} provided by the authors, and Matlab implementations of the other methods.

\begin{figure}[t]
\captionsetup[subfigure]{labelformat=empty}
\centering
\begin{subfigure}[b]{0.5\linewidth}
\centering
\includegraphics[width=0.95\linewidth]{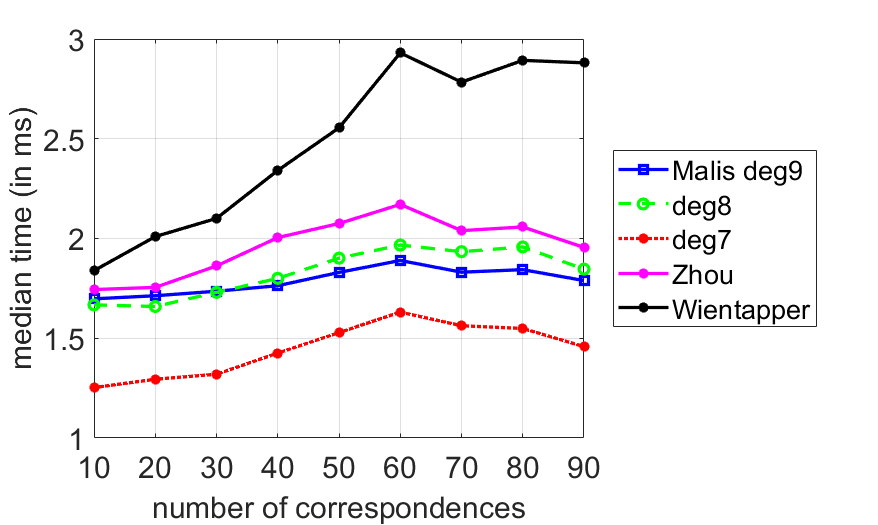}
\caption{$N$ from 10 to 90.}
\end{subfigure}%
\begin{subfigure}[b]{0.5\linewidth}
\centering
\includegraphics[width=0.95\linewidth]{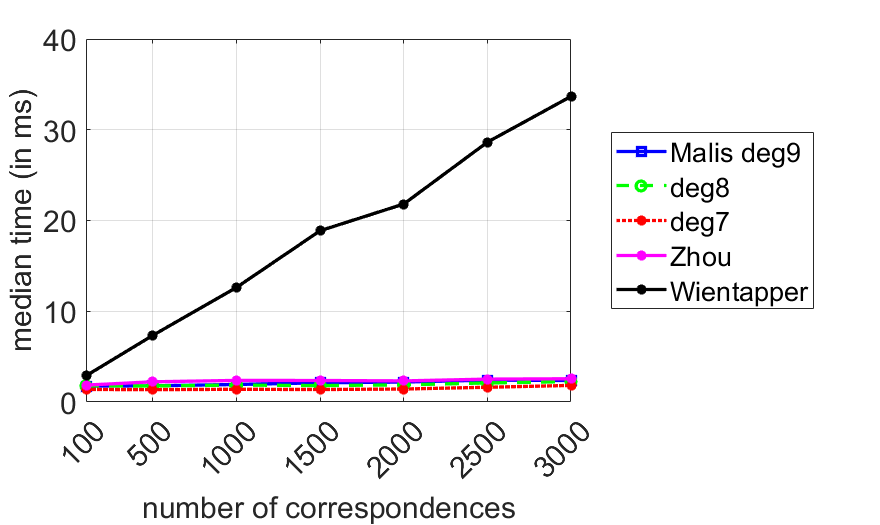}
\caption{$N$ from 100 to 3000.}
\end{subfigure}
\caption{Comparison of computational times.}
\label{fig:time}
\end{figure}

\subsubsection{Experiments with real data}
Similarly to the previous works \cite{Zhou-ICRA-2020,Malis2024}, the KITTI dataset \cite{Geiger2013IJRR} is used for the experimental evaluation. The current set of 3D points (LiDAR scan $k{+}1$) is segmented to extract planar structures and matched with the closest 3D points from the reference set (LiDAR scan $k$). Planes are detected in the point cloud by performing a~least-squares fitting on the $8$-nearest neighbors 
of each point. If the resulting least-squares residual is below a predefined threshold, the neighboring points are considered to belong to the same plane. Point-to-plane correspondences are then obtained using an Iterative Closest Point (ICP) strategy, where the estimated pose is ignored and only the correspondences are retained. Once the point-to-plane correspondences have been determined, the optimal pose between scans $k{+}1$ and $k$ is estimated and used to register the reference 3D points into the current frame. A robust Tukey M-estimator is employed to compute the weights of the weighted least-squares problem derived from \eqref{eqn:ls_3D_to_3D_correspondences}. Finally, the pose estimated with state-of-the-art approaches and the methods proposed in this paper are compared with the ground truth, and a translation error $\delta t$ (in meters) and a rotation error $\delta r$ (in degrees) are computed for each frame.

Table \ref{table1} shows the mean and standard deviation of translation and rotation errors and computation time obtained on sequences 03, 04, and 07 of the KITTI dataset. The proposed methods outperform the other methods in terms of both accuracy and computational time. 
\begin{table}[h]
\scriptsize
\begin{tabular}{|wl{0.17\linewidth}|wc{0.21\linewidth}|wc{0.21\linewidth}|wc{0.21\linewidth}|}
\hline
\multicolumn{4}{|c|}{KITTI sequence 03 (800 frames)} \\ \hline
\multirow{2}{*}{method} & rotation (°)  & translation (m) & time (ms) \\ 
 & $\mu(\delta r) \pm \sigma(\delta r)$ & $\mu(\delta t) \pm \sigma(\delta t)$ & $\mu(t) \pm \sigma(t)$ \\ \hline
Malis \cite{Malis2024}  &  0.2741$\pm$6.3519  &  0.0237$\pm$0.2125  &  3.3211$\pm$0.9909  \\ \hline
Wientapper \cite{Wientapper-CVIU-2018} & 0.0521$\pm$0.0365  &  0.0163$\pm$0.011  &  62.893$\pm$12.574  \\ \hline
Zhou \cite{Zhou-ICRA-2020}  & 0.0746$\pm$0.6813  &  0.018$\pm$0.0481  &  2.9928$\pm$0.9365  \\ \hline
deg8   & \textbf{0.0493}$\pm$\textbf{0.0162}  &  \textbf{0.0181}$\pm$\textbf{0.0115}  &  3.0317$\pm$0.744  \\ \hline
deg7   & \textbf{0.0493}$\pm$\textbf{0.0162}  &  \textbf{0.0181}$\pm$\textbf{0.0115}  &  \textbf{2.5318}$\pm$\textbf{0.4278}  \\ \hline
\end{tabular}

\vspace{1mm}
\begin{tabular}{|wl{0.17\linewidth}|wc{0.21\linewidth}|wc{0.21\linewidth}|wc{0.21\linewidth}|}
\hline
 \multicolumn{4}{|c|}{KITTI sequence 04 (270 frames)} \\ \hline
\multirow{2}{*}{method} & rotation (°)  & translation (m) & time (ms) \\ 
 & $\mu(\delta r) \pm \sigma(\delta r)$ & $\mu(\delta t) \pm \sigma(\delta t)$ & $\mu(t) \pm \sigma(t)$ \\ \hline
Malis \cite{Malis2024} & \textbf{0.0329}$\pm$\textbf{0.0191}  &  \textbf{0.0163}$\pm$\textbf{0.0093} &  3.3927$\pm$0.8963  \\ \hline
Wientapper \cite{Wientapper-CVIU-2018}& 0.0373$\pm$0.0274  &  0.0165$\pm$0.0094 & 66.125$\pm$13.96   \\ \hline
Zhou  \cite{Zhou-ICRA-2020} & 0.5892$\pm$9.1232 &  0.0755$\pm$0.9701  &  2.8593$\pm$1.0025  \\ \hline
deg8   & \textbf{0.0329}$\pm$\textbf{0.0191}  &  \textbf{0.0163}$\pm$\textbf{0.0093}  &  2.9417$\pm$0.5468  \\ \hline
deg7   & \textbf{0.0329}$\pm$\textbf{0.0191}  &  \textbf{0.0163}$\pm$\textbf{0.0093}  &  \textbf{2.4667}$\pm$\textbf{0.4254}  \\ \hline
\end{tabular}

\vspace{1mm}
\begin{tabular}{|wl{0.17\linewidth}|wc{0.21\linewidth}|wc{0.21\linewidth}|wc{0.21\linewidth}|}
\hline
 \multicolumn{4}{|c|}{KITTI sequence 07 (1100 frames)} \\ \hline
 \multirow{2}{*}{method} & rotation (°)  & translation (m) & time (ms) \\ 
 & $\mu(\delta r) \pm \sigma(\delta r)$ & $\mu(\delta t) \pm \sigma(\delta t)$ & $\mu(t) \pm \sigma(t)$ \\ \hline
Malis \cite{Malis2024}  & 1.1871$\pm$14.299  &  0.0866$\pm$0.9645 &  3.5282$\pm$0.9402  \\ \hline
Wientapper \cite{Wientapper-CVIU-2018}& 0.0467$\pm$0.0391  &  0.0121$\pm$0.07  &  49.726$\pm$16.979  \\ \hline
Zhou \cite{Zhou-ICRA-2020}  & 0971$\pm$1.0427 & 0.0136$\pm$0.0297  &  2.8219$\pm$0.6593  \\ \hline
deg8   & \textbf{0.0428}$\pm$ \textbf{0.0335}  &  \textbf{0.012}$\pm$\textbf{0.007}  &  3.498$\pm$1.0466  \\ \hline
deg7   & \textbf{0.0428}$\pm$ \textbf{0.0335}  &  \textbf{0.012}$\pm$\textbf{0.007}  &  \textbf{2.5303}$\pm$\textbf{0.7546}  \\ \hline
\end{tabular}
\caption{Comparison on KITTI datasets.}
\label{table1}
\end{table}

\subsection{Pose estimation from 3D to 2D correspondences}
The proposed methods can also be applied to the Pnp problem after derivation of the "canonical form" \eqref{eq:66}. We compare them with the following solvers:

UPnp \cite{Kneip-ECCV-2014} is an efficient solver that does not use the Lagrangian to solve the constrained least square problem. The problem solved by the UPnP is equivalent to set $\lambda = 0$ in \eqref{eq:133}. This reduces the number of solutions and hence the computational time. However, setting $\lambda = 0$ is a valid approximation only if the noise is (close to) zero.

Similarly, optDLS \cite{Nakano-optDLS} and SRPnP \cite{Wang-SRPnP} do not recover all the possible solutions. Furthermore, these methods use Cayley parameterization of the rotation (optDLS, SRPnP) do not perform well for rotations with an angle close to $\pi$ around any axis (see \cite{Zheng-OPnP}). 

SQPnP \cite{Terzakis-SQPnP} is an iterative method. Unlike closed-form methods, there is no theoretical guarantee that the method will reach the global minimizer. 

OPnP \cite{Zheng-OPnP} uses quaternion representation of the rotation and therefore is stable for rotation angles close to $\pi$ and it finds all 40 solutions. Our method finds the same number of solutions with the same accuracy, but is about 5 times faster.


\subsubsection{Experiments with simulated data}
In Table \ref{table2}, we compare the proposed methods with more solvers. It shows the average and maximal rotation and translation error and the average computation time over 10000 trials. We set the number of correspondences $N = 10$ and the noise standard deviation $\sigma = 1$ pixel. 

\begin{table*}[h]
\small
\centering
\begin{tabular}{|wc{0.15\columnwidth}|wc{0.3\columnwidth}wc{0.2\columnwidth}|wc{0.3\columnwidth}wc{0.2\columnwidth}|wc{0.4\columnwidth}|}
\hline
\multirow{2}{*}{} & \multicolumn{2}{c|}{rotation (°)}  & \multicolumn{2}{c|}{translation (m)} & time (ms)   \\ \cline{2-6} 
  & \multicolumn{1}{l|}{$\mu(\delta r) \pm \sigma(\delta r)$} & $\rm{max} (\delta r)$ & \multicolumn{1}{l|}{$\mu(\delta t) \pm \sigma(\delta t)$} & $\rm{max} (\delta t)$ & $\mu(t) \pm \sigma(t)$ \\ \hline
UPnP   & \multicolumn{1}{l|}{0.254 $\pm$ 0.630} & 33.07 & \multicolumn{1}{l|}{0.005 $\pm$ 0.007} & 0.380 & 0.734 $\pm$ 0.332 (in C) \\ \hline
optDLS & \multicolumn{1}{l|}{0.265 $\pm$ 1.604} & 71.61 & \multicolumn{1}{l|}{0.008 $\pm$ 0.099} & 5.029 & 2.000 $\pm$ 0.578  \\ \hline
OPnP   & \multicolumn{1}{l|}{0.207 $\pm$ 0.105} & 0.843 & \multicolumn{1}{l|}{0.004 $\pm$ 0.003} & 0.021 & 11.75 $\pm$ 2.747  \\ \hline
SRPnP   & \multicolumn{1}{l|}{0.214 $\pm$ 0.155} & 10.79 & \multicolumn{1}{l|}{0.004 $\pm$ 0.003} & 0.133 & 0.759 $\pm$ 0.409  \\ \hline
SQPnP   & \multicolumn{1}{l|}{0.629 $\pm$ 6.373} & 177.8 & \multicolumn{1}{l|}{0.014 $\pm$ 0.159} & 3.190 & 0.085 $\pm$ 0.546 (in C++) \\ \hline
deg9   & \multicolumn{1}{l|}{0.207 $\pm$ 0.102} & 0.844 & \multicolumn{1}{l|}{0.004 $\pm$ 0.003} & 0.021 & 2.561 $\pm$ 1.004  \\ \hline
deg8   & \multicolumn{1}{l|}{0.207 $\pm$ 0.102} & 0.844 & \multicolumn{1}{l|}{0.004 $\pm$ 0.003} & 0.021 & 2.417 $\pm$ 0.856  \\ \hline
deg7   & \multicolumn{1}{l|}{0.207 $\pm$ 0.102} & 0.844 & \multicolumn{1}{l|}{0.004 $\pm$ 0.003} & 0.021 & 2.204 $\pm$ 0.768  \\ \hline
\end{tabular}
\caption{Comparison to other PnP solvers.}
\label{table2}
\end{table*}

To further test the accuracy of the proposed methods, we used images from the ETH3D dataset \cite{ETH3D-2017}. The dataset contains original images, the coordinates of 3D points and the corresponding 2D points, the pose of cameras associated with each image and the intrinsic parameters of the cameras. We compare the proposed methods with the algorithm from \cite{Malis2024} and the UPnp method \cite{Kneip-ECCV-2014} on the 25 available datasets. 
Figure \ref{fig:3Dto2D} shows the mean rotation and translation error averaged across all datasets. In each dataset, we chose one image and 10 correspondences, where we perturbed the 2D and 3D ground truth points by increasing noise. Methods solving the least squares problem exactly, including ours, show better accuracy with increasing noise. The proposed approaches were implemented in Matlab and are slower than the UPnp method. Indeed, we used the UPnp implementation in C available in the OpenGV library \cite{Kneip-OpenGV-2014}. However, more accurate approaches may be useful in non-real time application.

\begin{figure}[t]
\captionsetup[subfigure]{labelformat=empty}
\centering
\begin{subfigure}[b]{0.5\linewidth}
\centering
\includegraphics[width=0.95\linewidth]{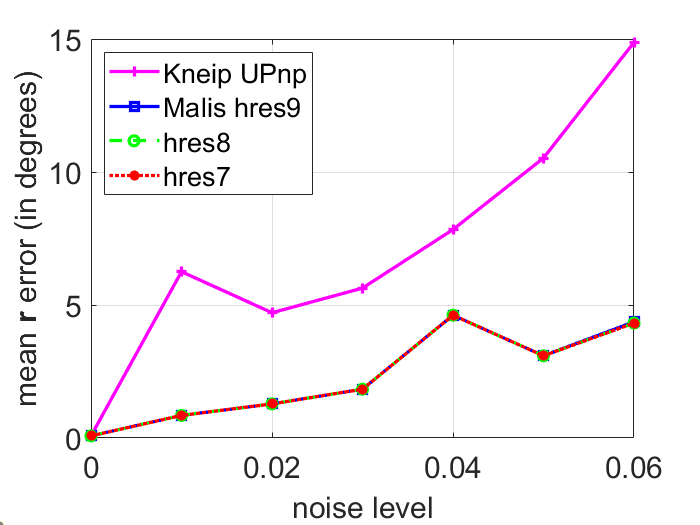}
\caption{Rotation error (degrees).}
\end{subfigure}%
\begin{subfigure}[b]{0.5\linewidth}
\centering
\includegraphics[width=0.95\linewidth]{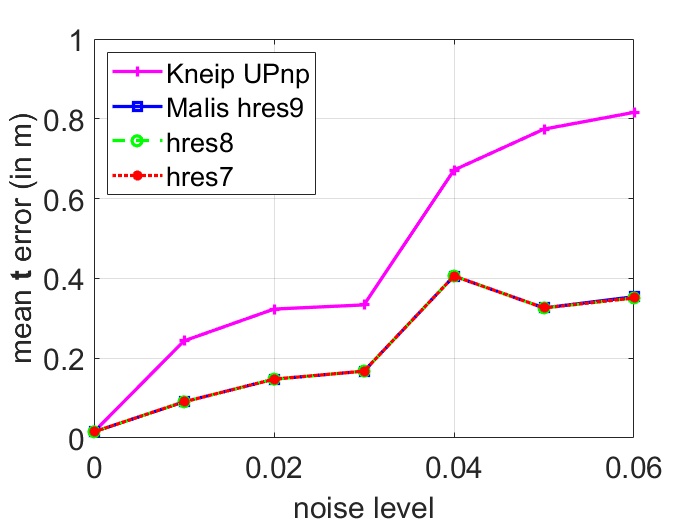}
\caption{Translation error (m).}
\end{subfigure}
\caption{Pnp problem on ETH3D dataset with increasing noise.}
\label{fig:3Dto2D}
\end{figure}

\subsubsection{Experiments with real data}
To test our methods on real data, we again used the ETH3D dataset \cite{ETH3D-2017}. The dataset contains 25 sequences of images, together with the coordinates of 3D points and the corresponding 2D points, the pose of cameras associated with each image, and the intrinsic parameters of the cameras. For every sequence, we chose one reference image $\textbf{im}^i_0$, where $i = 1,\dots,25$. In every other image $\textbf{im}^i_j$ of the i-th sequence, we matched points with points in the reference image. The preimages of the matched points in the reference image $\textbf{im}^i_0$ are the 3D points $\f m_r$ from equation \eqref{eq:79}. In image $\textbf{im}^i_j$, we filter out matched points, that are farther than 5 pixels from the true matches. The remaining points are the image points $\f p_c$ in the image $\textbf{im}_j^i$. We estimated the pose $\hat{\f R}, \hat{\f t}$, and compared it with the ground truth pose $\f R,\f t$, such that equation \eqref{eq:79} holds.

We did not consider images $\textbf{im}^i_j$ that do not overlap with the reference image $\textbf{im}^i_0$, nor those with fewer than 4 correspondences.

\begin{table}[t]
\centering
\footnotesize
\begin{tabular}{|wl{0.205\columnwidth}|wc{0.31\columnwidth}|wc{0.31\columnwidth}|}
\hline 
\multirow{2}{*}{} & rotation (°)&translation (m)\\ \cline{2-3} 
    & $\mu(\delta r) \pm \sigma(\delta r)$ &$\mu(\delta t) \pm \sigma(\delta t)$\\ \hline
Kneip UPnp & 6.9055 $\pm$ 18.2656 & 0.80875 $\pm$ 2.8761 \\ \hline
Malis deg9 & 2.1398 $\pm$ 5.9209 & 0.44969 $\pm$ 1.2829 \\ \hline
deg8 & 2.2221 $\pm$ 6.3222 & \textbf{0.4363 $\pm$ 1.2553}  \\ \hline
deg7 & \textbf{2.1263 $\pm$ 5.8971} & 0.46607 $\pm$ 1.3665  \\ \hline
\end{tabular}
\caption{Comparison of the developed methods with UPnp method by Kneip, Li and Seo \cite{Kneip-ECCV-2014}, and the deg9 method by Malis \cite{Malis2024}. The results are averaged over all sequences in the ETH3D dataset.}
\label{table3}
\end{table}

\begin{table}[t]
\centering
\footnotesize
\begin{tabular}{|wl{0.12\columnwidth}|wc{0.15\columnwidth}|wc{0.15\columnwidth}|wc{0.15\columnwidth}|wc{0.15\columnwidth}|}
\hline 
 n & Kneip UPnp & Malis deg9 & deg8 & deg7 \\ \hline
4 & 35.1584 & 20.0303 & 18.4840 & \textbf{15.3291}  \\ \hline
5 & 45.7671 & 21.0113 & 20.7705 & \textbf{18.9686} \\ \hline
6 & 53.4829 & 14.4955 & 14.4299 & \textbf{14.0446} \\ \hline
7 & 44.3037 & \textbf{10.0844} & 11.0800 & 10.1060 \\ \hline
8 & 28.4090 & \textbf{13.0101} & 13.0107 & \textbf{13.0101} \\ \hline
9 & 24.9650 & 7.8337 & 7.8338 & \textbf{7.8331} \\ \hline
10 & 2.2087 & 0.7268 & \textbf{0.7267} & 0.7268 \\ \hline
11 - 25 & 12.4584 & \textbf{4.0925} & \textbf{4.0925} & \textbf{4.0925} \\ \hline
26 - 50 & 10.7784 & \textbf{2.7428} & \textbf{2.7428} & \textbf{2.7428} \\ \hline
51 - 100 & 11.5497 & \textbf{1.9820} & \textbf{1.9820} & \textbf{1.9820} \\ \hline
101 - 200 & 4.3847 &  \textbf{0.3016} & \textbf{0.3016} & \textbf{0.3016} \\ \hline
$\geq$ 200 & 8.7278 & \textbf{0.5582} & \textbf{0.5582} & \textbf{0.5582} \\ \hline
\end{tabular}
\caption{Mean rotational error (in degrees) for given number n of correspondences. The results are averaged over all sequences in the ETH3D dataset.}
\label{table4}
\end{table}

\begin{table}[t]
\centering
\footnotesize
\begin{tabular}{|wl{0.12\columnwidth}|wc{0.15\columnwidth}|wc{0.15\columnwidth}|wc{0.15\columnwidth}|wc{0.15\columnwidth}|}
\hline 
n & Kneip UPnp & Malis deg9 & deg8 & deg7 \\ \hline
4 & 8.2631 & 6.6374 & 6.8478 & \textbf{3.7874} \\ \hline
5 & 1.5255 & 3.5584 & 3.3815 & \textbf{2.7821} \\ \hline
6 & 11.5245 & 2.4866 & 2.4848 & \textbf{2.4597} \\ \hline
7 & 1.4194 & \textbf{2.0547} & 2.1253 & 2.0578 \\ \hline
8 & 0.6908 & 1.4533 & \textbf{1.4531} & 1.4533 \\ \hline
9 & 5.8250 & \textbf{1.9149} & \textbf{1.9149} & \textbf{1.9149} \\ \hline
10 & 0.6147 & \textbf{0.1549} & \textbf{0.1549} & \textbf{0.1549} \\ \hline
11 - 25 & 12.4584 & \textbf{4.0925} & \textbf{4.0925} & \textbf{4.0925} \\ \hline
26 - 50 & 1.0293 & \textbf{0.6142} & \textbf{0.6142} & \textbf{0.6142} \\ \hline
51 - 100 & 0.8434 & \textbf{0.1951} & \textbf{0.1951} & \textbf{0.1951} \\ \hline
101 - 200 & 4.3847 & \textbf{0.3016} & \textbf{0.3016} & \textbf{0.3016} \\ \hline
$\geq$ 200 & 0.6028 & \textbf{0.0429} & \textbf{0.0429} & \textbf{0.0429} \\ \hline
\end{tabular}
\caption{Mean translation error (in meters) for given number n of correspondences. The results are averaged over all sequences in the ETH3D dataset.}
\label{table5}
\end{table}

We compared the proposed methods with the algorithm by Malis \cite{Malis2024} and the UPnp method by Kneip, Li and Seo \cite{Kneip-ECCV-2014} (without the final Newton step). Table \ref{table3} shows the rotation and translation error averaged over the 25 sequences.
Tables \ref{table4} and \ref{table5} show the average rotation and translation error (respectively) for a given number of correspondences. The results are again averaged over the 25 sequences. All the compared algorithms improve with growing number of correspondences. The resultant-based methods outperform the state-of-the-art UPnp method and, for low number of correspondences, the proposed deg8 and deg7 methods  often perform better than the deg9 algorithm by Malis.

\section{Conclusion}
In this work, we showed how to integrate Sylvester forms in resultant-based methods
and proved the validity of our approach. 
We obtained new resultant-based methods that operate in degrees 7 and 8, significantly reducing the size of the elimination matrices. This has a significant impact on the computation time outperforming previous approaches.

An important open question concerns the selection of the monomial ordering, which affects the conditioning of the blocks of the elimination matrix from which the solutions are computed. Fixing the block construction in advance is computationally more efficient, whereas selecting it online can improve numerical accuracy. Understanding how to choose this ordering optimally remains an interesting direction for future work.


\bibliographystyle{IEEEtran}
\bibliography{main}

\end{document}